\documentclass{article}

% if you need to pass options to natbib, use, e.g.:
\PassOptionsToPackage{numbers, compress, sort}{natbib}
% before loading neurips_2021

% ready for submission
\usepackage[preprint]{neurips_2021}

% to compile a preprint version, e.g., for submission to arXiv, add add the
% [preprint] option:
%     \usepackage[preprint]{neurips_2021}

% to compile a camera-ready version, add the [final] option, e.g.:
%     \usepackage[final]{neurips_2021}

% to avoid loading the natbib package, add option nonatbib:
%    \usepackage[nonatbib]{neurips_2021}

 % \blfootnote{The authors are listed in the alphabetical order.}

\usepackage[utf8]{inputenc} % allow utf-8 input
\usepackage[T1]{fontenc}    % use 8-bit T1 fonts
\usepackage{hyperref}       % hyperlinks
\usepackage{url}            % simple URL typesetting
\usepackage{booktabs}       % professional-quality tables
\usepackage{amsfonts}       % blackboard math symbols
\usepackage{nicefrac}       % compact symbols for 1/2, etc.
\usepackage{microtype}      % microtypography
\usepackage{xcolor}         % colors

% Additional packages
\usepackage{algorithm}
\usepackage[noend]{algpseudocode}
\usepackage{graphicx}
\usepackage{multirow}
\usepackage{booktabs}
\usepackage{color}
\usepackage{bm}
\usepackage{bbm}
\usepackage{amsmath}
\usepackage{amsthm}
\usepackage{amssymb}
\usepackage{subfigure}
\usepackage{pdfpages}
\usepackage{natbib}
\usepackage{wrapfig}
\usepackage{comment}
\usepackage{multicol}
\usepackage{hyperref}
\usepackage{mathtools}
\usepackage{url}
\usepackage{fontawesome5}
\usepackage{enumerate}
\usepackage{enumitem}
\usepackage{titlesec}
\usepackage{array}

\titlespacing{\paragraph}{%
  0pt}{%              left margin
  0.4em}{% space before (vertical)
  0.5em}%               space after (horizontal)

% Settings
\hypersetup{colorlinks,citecolor=blue}
\definecolor{blue}{HTML}{15316E}
\definecolor{red}{HTML}{800000}

\newtheorem{theorem}{Theorem}
\newtheorem{definition}[theorem]{Definition}
\newtheorem{lemma}[theorem]{Lemma}
\newtheorem{proposition}[theorem]{Proposition}

\newtheorem{example}[theorem]{Example}

% Shortcuts
\newcommand{\E}{\mathbb{E}}

\newcommand{\argmin}{\mathrm{argmin}}

\newcommand{\tsf}[1]{{\small\textsf{#1}}}

%For theorems
\allowdisplaybreaks

\title{
   Learning on Random Balls is Sufficient\\for Estimating (Some) Graph Parameters
}
% When random neighborhoods are enough for learning GNN? 
% Sometimes, for GNN, all we need is random sampling
% When random balls are enough for learning GNN? 
% 

% GNNs on Random Balls
% When random balls are enough for (graph parameter estimation) GNN? 

% GNN on Random Balls are Enough for (Some) Graph Parameter Estimation
% GNNs on Random Balls are Sufficient for (Some) Graph Parameter Estimation
% GNNs on Random Balls are Sufficient for Estimating (Some) Graph Parameters 
\date{}

\author{
    Takanori Maehara\\
    Facebook AI\\
    London, United Kingdom\\
    \texttt{tmaehara@fb.com} \\
    \And
	Hoang NT\\
	Tokyo Tech \& RIKEN AIP\\
	Tokyo, Japan\\
	\texttt{hoangnt@net.c.titech.ac.jp}
	% \texttt{hoang.nguyen.rh@riken.jp}
}

\begin{document}

\maketitle

\begin{abstract}
Theoretical analyses for graph learning methods often assume a complete observation of the input graph. 
Such an assumption might not be useful for handling any-size graphs due to the scalability issues in practice. 
In this work, we develop a theoretical framework for graph classification problems in the partial observation setting (i.e., subgraph samplings). 
Equipped with insights from graph limit theory, we propose a new graph classification model that works on a randomly sampled subgraph and a novel topology to characterize the representability of the model.
Our theoretical framework contributes a theoretical validation of mini-batch learning on graphs and leads to new learning-theoretic results on generalization bounds as well as size-generalizability without assumptions on the input. 
\end{abstract}

%%%%%%%%%%%%%%%%%%%%%%%%%%%%%%%%%%%%%%%%%%%%%%%%%%
%%%%%%%%%%%%%%%%%%%%%%%%%%%%%%%%%%%%%%%%%%%%%%%%%%
\section{Introduction}
\label{sec:intro}

Going beyond regular structural inputs such as grids (images), sequences (time series, sentences), or general feature vectors is an important research direction of machine learning and computational sciences.
Arguably, most interesting objects and problems in nature can be described as graphs~\citep{lovasz2012large}.
For such reason, graph learning methods, especially Graph Neural Networks (GNN)~\citep{scarselli2008graph}, have recently proven to be a useful solution to many problems in computer vision~\cite{chebynets,valsesia2018learning,fey2018spline,kampffmeyer2019rethinking}, complex network analyses~\cite{graphsage,gcn,pinsage}, molecule modeling~\cite{mahe2009graph,mayr2016deeptox,gilmer2017neural,klicpera2020directional}, and physics simulations~\cite{battaglia2016interaction,kipf2018neural,pfaff2020learning}.

The significant value of graph learning models in practice has inspired a large amount of theoretical work dedicated to exploring their representational limits and the possibilities of improving them.
Most notably, the representational capability of GNNs has been in the spotlight of recent years.
To answer the question ``\emph{Can GNNs approximate all functions on graphs?}'', researchers discussed universal invariant and equivariant neural networks~\citep{keriven2019universal,maron2019provably,maehara2019simple,hoang2020graph} as \emph{theoretical} upper limits for neural architectures or showed the correspondence between message-passing GNNs (MP-GNNs) to the Weisfeiler-Leman (WL) algorithm~\cite{weisfeiler1968reduction} as \emph{practical} upper limits~\cite{morris2019weisfeiler,gin}.

Given an extremely large graph as an input, it is often impractical to keep the whole graph in the working memory.
Therefore, practical graph learning methods often utilize neighborhood samplings~\cite{graphsage,pinsage} or random walks~\citep{perozzi2014deepwalk} to handle this scalability issue. 
Because existing analyses assumed a complete observation of the input graphs~\cite{sato2020survey,keriven2019universal,hoang2020graph}, it is unclear what can be learned if we combine graph learning models with random samplings.
Thus, the relevant question in this scenario is ``\emph{What graph functions are representable by GNNs when we can only observe random neighborhoods?}.''
This question adds another dimension to the discussion of GNN expressivity; even if we have a powerful GNN (in both theoretical and practical senses), what kind of graph functions can we learn if the input graphs are too large to be computed as a whole?

\paragraph{Contributions} 
This study proposes a theoretical approach to address graph learning problems on large graphs by identifying a novel topology of the graph space.
We discuss the graph classification problem in the main part of the paper and 
extend the discussion to the vertex classification problem in Appendix~\ref{appendix:vertex}.
The extension to vertex classification can be realised by viewing it as a rooted-graph classification problem.
We first introduce a \emph{random ball sampling GNN (RBS-GNN)}, 
which is a mathematical model of GNNs implementable in a \emph{random neighborhood} computational model,
and prove that the model is universal in the class of estimable functions (Theorem~\ref{thm:universal-rbs-gnn}).
Our main contribution is introducing \emph{randomized Benjamini--Schramm topology} in the space of all graphs
and identifying the estimability of the function as the uniform continuity in this topology (Theorem~\ref{thm:estimability-equiv-continuity}).
By applying our main theorem, we obtain the following learning-theoretic results.
\begin{itemize}[noitemsep,topsep=0pt]
    \item We show the equivalent of estimability and continuity (Theorems~\ref{thm:universal-rbs-gnn} and \ref{thm:estimability-equiv-continuity}).
    This implies the continuity assumption is a sufficient condition for the mini-batch learning on graphs.
    \item 
    We prove that the functions representable by RBS-GNNs are generalizable
    by showing an upper bound of the Rademacher complexity of Lipschitz graph functions (Theorem~\ref{thm:rademacher-complexity}).
    \item We identify size-generalizable functions with estimable functions (Theorem~\ref{thm:est-size-generalizable}).
    Then, by recognizing the size-generalization as a domain adaptation,
    we provide a size-generalization error based on the Wasserstein distance  (Theorem~\ref{thm:domain-adaptation}).
\end{itemize}
Unlike existing studies, which assumed a random graph model~\cite{kawamoto2018mean,keriven2020convergence} or boundedness~\cite{sato2019approximation,gntk,keriven2019universal,hoang2020graph}, our framework does not assume anything about the graph class; instead, we assume the continuity of the graph functions.
Our results listed above are model-agnostic, i.e., we only discuss the property of the function space, regardless of how GNN models are implemented.
The model-agnostic nature of our results gives a systematic view to general graph parameters learning; their generality is especially useful as there are many different GNN architectures in practice~\cite{wu2020comprehensive,zhou2020graph,sato2020survey}.

%%%%%%%%%%%%%%%%%%%%%%%%%%%%%%%%%%%%%%%%%%%%%%%%%%
%%%%%%%%%%%%%%%%%%%%%%%%%%%%%%%%%%%%%%%%%%%%%%%%%%
\section{Related Work}
\label{sec:related}

\paragraph{Large-scale GNNs}
The success of GNNs, especially vertex classification models like GCN~\cite{gcn} and GraphSAGE~\cite{graphsage}, has led to various large-scale industrial GNN systems~(see \cite{abadal2020computing} and references therein).
Aiming to increase computational throughput while maintaining the predictive performance, most of these systems implemented fixed-size neighborhood sampling~\cite{graphsage} to enable large-scale batching~\cite{pinsage,zhang2020agl,zheng2020distdgl,zhihao2020improving}.
GNNs have also been applied to the 3D point clouds classification problem~\citep{wang2019gcnn}, which translates a computer vision problem to the large graph classification problem, 
and the random sampling was empirically shown to be effective~\citep{lang2020samplenet}.
In this context, our work contributes a theoretical justification for the random sampling procedure.
%Although the framework is developed for graph classification problem, we provide an extension to the vertex classification problem in Appendix~\ref{appendix:vertex}.

\paragraph{Graph Parameter Learning}
Graph function, graph parameter, or graph invariant refer to a (real or integer value) property of graphs, which only depends on the graph structure. 
In other words, they are functions defined on isomorphism classes of graphs~\cite{lovasz2012large}.
Determining graph properties from data has long been a topic of interest in theoretical computer science~\citep{lovasz2012large,czumaj2019testable} 
and is an important machine learning task in computational chemistry~\cite{debnath1991structure,gilmer2017neural} and biology~\cite{gartner2003survey,borgwardt2005protein}. 
Recently, GNNs have been proven successful on a wide range of graph learning benchmark datasets.
Current literature analyzed their expressivity to gain a better understanding of the architectures~\cite{maron2018invariant,keriven2019universal,hoang2020graph}.
Several works identify MP-GNNs to the 1-dimensional WL isomorphism test~\cite{gin} and further improve the GNN architectures to more expressive variants such as $k$-dimensional WL~\cite{morris2019weisfeiler}, port-numbered message passing~\cite{sato2019approximation}, and sparse WL~\cite{morris2020weisfeiler}. 
GNNs are also linked to the representational power of logical expressions~\cite{barcelo2019logical}.
These theoretical results assumed the complete observation of the input graph; therefore, it is difficult to see to what extent these results would hold when the only partial observation is available.
By studying the RBS-GNN model, we give an answer to this issue.
We use GNNs because they are the most expressive graph learning methods~\citep{gin,keriven2019universal}.
Nonetheless, our results generalize for other universal (Theorem~\ref{thm:universal-rbs-gnn}) and partially-universal (Theorem~\ref{thm:indist}) methods.

\paragraph{Generalization}
Besides expressivity, another challenge in graph learning is to understand the generalization bounds.
\citet{scarselli2018vapnik} introduced an upper bound for the VC-dimension of functions computable by GNNs, in which the output is defined on a special supervised vertex.
\citet{garg2020generalization} derived tighter Rademacher complexity bounds for similar MP-GNNs by considering the local computational tree structures.
\citet{liao2021pac} obtained a generalization gap of MP-GNNs and GCNs~\cite{gcn} using PAC-Bayes techniques.
\citet{gntk} obtained a sample complexity using a result in the kernel method for their graph neural tangent kernel model in learning propagation-based functions.
\citet{verma2019stability} obtained a generalization gap of single-layer GCNs by analyzing the stability and dependency on the largest eigenvalue of the graph;
\citet{lv2021generalization} derived a Rademacher bound for a similar GCN model with a similar dependency.
\citet{keriven2020convergence} assumed an underlying random kernel (similar to graphons~\cite{lovasz2012large}) and analyzed the stability of discrete GCN using a continuous counterpart c-GCN.
They derived the convergence bounds by looking at stability when diffeomorphisms~\cite{mallat2012group} are applied to the underlying graph kernel, the distribution, and the signals. 
All these methods placed some assumptions on the graph space; either bounded degree~\cite{garg2020generalization,liao2021pac,lv2021generalization}, bounded number of vertices~\cite{scarselli2018vapnik,verma2019stability,gntk}, or graphs belong to a random model~\cite{keriven2020convergence}.
Therefore, all these results become either inapplicable or unbounded in the general graph space.
Our Theorem~\ref{thm:rademacher-complexity} contributes a complexity bound without assumptions on the graphs.

\paragraph{Property Testing and Constant-Time Local Algorithms}
Property testing on graphs is a task to identify whether the input graph satisfies a graph property $\Pi$ or $\epsilon$-far from $\Pi$~\cite{goldreich2010introduction}.
Often a researcher in this area tries to derive an algorithm whose complexity is constant (i.e., only depends on $\epsilon$) or sublinear in the input size~\cite{rubinfeld2011sublinear}.
Several graph properties admit sub-linear (or constant-time) algorithms;
the examples include bipartite testing, triangle-free testing, edge connectivity, and matching~\citep{yoshida2009improved,nguyen2008constant}.
Recently, by bridging the constant-time algorithms and the GNN literature, \citet{sato2019constant} showed that, for each vertex, the neighborhood aggregation procedure of a GNN layer (they called it ``node embedding'') can be approximated in constant time.
However, this does not result in a constant-time learning algorithm for GNNs because we still need to access all the vertices to get the desired outputs.
Our results provide the first ``fully constant-time'' GNNs in the sense that the whole learning and prediction process runs in time independent of the size of the graphs (Section~\ref{sec:rbsgnn}).

\paragraph{Statistical Theory for Network Analysis}

Learning graph property from samples is a traditional topic in statistical network analysis~\cite{frank1977estimation,goodman1949estimation,kolaczyk2014statistical}.
\citet{klusowski2020estimating} proved that it is hard to estimate the number of connected components using sublinear-size samples. \citet{bhattacharya2020motif}
showed that the Horvitz--Thompson estimator for the number of subgraphs of constant size is consistent and asymptotic normal if the fourth-moment condition holds.
Our result is consistent with these results as the number of connected components is non-continuous and the number of motifs is continuous in the randomized Benjamini--Schramm topology.
These studies indicate a future direction of this study; for example, the asymptotic normality and consistency of RBS-GNNs.

%%%%%%%%%%%%%%%%%%%%%%%%%%%%%%%%%%%%%%%%%%%%%%%%%%
%%%%%%%%%%%%%%%%%%%%%%%%%%%%%%%%%%%%%%%%%%%%%%%%%%
\section{Preliminaries}
\label{sec:pre}

\subsection{Graphs}
\label{ssec:rooted-graph}

A \emph{(directed) graph} $G$ is a tuple $(V, E)$ of the set of vertices $V$ and the set of edges $E \subseteq V \times V$.
%, where $V$ is a non-empty set of nodes (vertices) and $E \subseteq V \times V$ is a set of edges.
We use $V(G)$ for $V$ and $E(G)$ for $E$ when the graph is unclear from the context.
%A graph is \emph{weakly connected} if there is a path between any two vertices in the underlying undirected graph.
A graph is \emph{weakly connected} if the underlying undirected graph has a path between any two vertices.
A \emph{weakly connected component} is a maximal weakly connected subgraph.
%We denote the $i^\text{th}$ weakly connected component of a graph $G$ by $C_i$.
Two graphs $G$ and $H$ are \emph{isomorphic} if there is a bijection $\phi: V(G) \rightarrow V(H)$ such that $(\phi(u),\phi(v)) \in E(H)$ if and only if $(u,v) \in E(G)$.
Let $\mathcal{G}$ be the set of all directed graphs.
%The main topic of this study is the topology of $\mathcal{G}$ with regards to our proposed computational model.
A \emph{ball of radius $r$ centered at $v$}, $B_r(v)$ (also simply $B$), is the set of vertices whose shortest path distance from $v$ is bounded by $r$.
For $U \subseteq V(G)$, $G[U]$ is the subgraph of $G$ induced by $U$. 

%In graph neural networks, especially message-passing architectures, computation is performed on neighborhoods around each vertex in a given graph. 
%For this reason, it is interesting to investigate a special type of graph named \emph{rooted graph}.
A \emph{rooted graph} $(G, v)$ is a graph $G$ augmented with a vertex $v$ in $V(G)$.
The isomorphism between $(G,v)$ and $(H,u)$ is defined in the same way as for graphs with the extra requirement that it maps $v$ to $u$.
A \emph{$k$-rooted graph} $(G, v_1, \dots, v_k)$ is defined similarly.
We often recognize the graph induced by the ball $B_r(v)$ as a rooted graph whose root is $v$ and by the union of $k$ balls as a $k$-rooted graph.

Modern graph learning problems ask for a function $p: \mathcal{G} \rightarrow \mathcal{D}$ from training data, where $\mathcal{D}$ is a ``learning-friendly'' domain such as the set of real numbers $\mathbb{R}$, a $d$-dimensional real vector space $Q \subseteq \mathbb{R}^d$, or some finite sets.
In most cases, the function $p$ is required to be isomorphism-invariant (or invariant for short).
This notion of graph functions coincides with the definition of \emph{graph parameters}.
Another term used in the literature is \emph{graph property}, which can be formalized as a graph function whose co-domain is $\{0,1\}$. 
Our work focuses on the case in which the co-domain is $\mathbb{R}$.
 
\subsection{Computational Model}
\label{ssec:computational-model}

Extremely large graphs are usually stored in some complicated storage.
Thus, there are some constraints on how we can access the graphs.
In the area of property testing, such a situation is modeled by introducing a \emph{computational model}, which is an oracle for accessing the graph.
Importantly, each computational model induces a topology on the graph space.
As we will show in later sections, the ability to represent graph functions is related to this topology. 

There are three main computational models in the literature: the \emph{adjacency predicate model}~\citep{goldreich1998property}, the \emph{incidence function model}~\citep{goldreich1997property}, and the \emph{general graph model}~\citep{parnas2002testing,kaufman2004tight}. 
The adjacency predicate model, also known as the dense graph model, allows randomized algorithms to query whether two vertices are adjacent or not.
With the incidence function model, also known as the bounded-degree graph model, algorithms can query a specific neighbor of a vertex.
The general graph model lets the algorithms ask for both a specific neighbor and for whether two vertices are adjacent; hence, this is the most realistic model for actual algorithmic applications~\citep{goldreich2010introduction}.

In this study, we consider the following \emph{random neighborhood model}, which allows us to access the input graph $G$ via the following queries:
\begin{itemize}[noitemsep,topsep=0pt]
    \item \tsf{SampleVertex}($G$): Sample a vertex $u \in V$ uniformly randomly.
    \item \tsf{SampleNeighbor}($G$, $u$): Sample a vertex $v$ from the neighborhood of $u$ uniformly randomly, where $u$ is an already obtained vertex.
    \item \tsf{IsAdjacent}($G$, $u$, $v$): Return whether the vertices $u$ and $v$ are adjacent, where $u$ and $v$ are already obtained vertices.
\end{itemize}

This model is a randomized version of the general graph model.
\citet{czumaj2019testable} proposed a similar model to analyze edge streaming algorithms for property testing.
However, their model does not have the \tsf{IsAdjacent} query, i.e., it is a randomized version of the incidence function model.
Note that the computational model naturally specifies the \emph{estimability} of the graph parameters.
More formally, we have the following definition of estimability with respect to our random neighborhood model.
\begin{definition}[Constant-Time Estimable Graph Parameter]
\label{def:est}
A graph parameter $p$ is constant-time estimable on the random neighborhood model (estimable for short) if for any $\epsilon > 0$ there exists an integer $N$ and a randomized algorithm $\mathcal{A}$ in the random neighborhood model such that $\mathcal{A}$ performs at most $N$ queries and $|\mathcal{A}(G) - p(G)| < \epsilon$ with probability at least $1 - \epsilon$ for all graphs $G \in \mathcal{G}$. 
\end{definition}
Some examples of (non-)estimable graph parameters are:

\begin{example}
\label{example:nonest}
The number of vertices, min/max degree, and connectivity are not estimable.
\end{example}

\begin{example}
\label{example:est}
The triangle density and the local clustering coefficient are estimable.
\end{example}

Additional examples of estimable graph parameters and experimental results are provided in Appendix~\ref{appendix:exp}. 
In the next section, we implement a GNN following the proposed random neighborhood computational model.
By showing the connection between the GNN and algorithms in the random neighborhood model, we obtain several theoretical results in Section~\ref{sec:theory}. 

\section{Random Balls Sampling Graph Neural Networks (RBS-GNN)}
\label{sec:rbsgnn}

This section introduces \emph{RBS-GNN}, a theoretical GNN architecture based on the random neighborhood model.
RBS stands for ``Random Balls Sampling'' and also ''Random Benjamini--Schramm'' because our random neighborhood model extends the topology of the Benjamini--Schramm convergence~\citep{benjamini2011recurrence}.
Given an input graph, an RBS-GNN samples $k$ random vertices and proceeds to sample random balls $B_1,\dots,B_k$ rooted at each of these vertices.
A random ball of radius $r$ and branching factor $b$ is a subgraph obtained by the procedure \tsf{RandomBallSample}, illustrated in Figure~\ref{fig:rbs_illustration} for $r=1$ and $b=4$. 
The exact procedure is presented in Algorithm~\ref{algo:practical-rbsgnn}. 
It is trivial to see that \tsf{RandomBallSample} can be implemented under the random neighborhood model with \tsf{SampleVertex} and \tsf{SampleNeighbor}.

After sampling $k$ random balls, the next step is identifying the induced subgraph $G[B_1 \cup \dots \cup B_k]$ using the \tsf{IsAdjacent} procedure and computing the weakly connected components $C_1, \dots, C_{N_{C}}$ of the induced subgraph 
(Step 3 of Figure~\ref{fig:rbs_illustration}).
The classifier part of an RBS-GNN has two trainable components: a multi-layer perceptron $g$ and a GNN $f$.
The output of an RBS-GNN is defined as
\begin{align}
    \label{eq:rb-gnn}
    \text{RBS-GNN}(G) = g\left(\sum_j f(C_j)\right).
\end{align}
It should be emphasized that, as mentioned in the end of Section~\ref{sec:related}, our RBS-GNN can be evaluated in constant time (i.e., only dependent on the hyperparameters) because the subsets returned by \tsf{RandomBallSample} has a constant size regardless of the size of input graph $G$.

\begin{figure}
    \centering
    \includegraphics[width=0.98\textwidth]{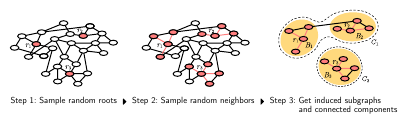}
    \vspace{-1em}
    \caption{Random Balls Sampling Procedure (Algorithm~\ref{algo:practical-rbsgnn}). Our computational model is different from the existing general graph model at Step 2, where we sample neighbors randomly instead of taking all neighbors. In Step 3, the randomly sampled edges are shown with color, and the induced edges are black. The weakly connected components $C_1$ and $C_2$ are inputs to the GNN.}
    \label{fig:rbs_illustration}
\end{figure}

\begin{algorithm}
\caption{Randomized Benjamini--Schramm GNN}
\label{algo:practical-rbsgnn}
\begin{algorithmic}[1]
\Procedure{RandomBallSample}{$G$, $b$, $r$}
    \State $\texttt{layer}[0] \leftarrow []$, $\dots$, $\texttt{layer}[r]$ $\leftarrow$ $[]$
    \State Sample one random vertex from $V(G)$ and insert to $\texttt{layer}[0]$
    \For{$i = 1, \dots, r$}
    \For{$u$ in $\texttt{layer}[i-1]$}
    \State Sample $b$ random vertices (with replacement) from $\mathcal{N}(u)$ and insert to $\texttt{layer}[i]$
    \EndFor
    \EndFor
    \State{\Return $G[\texttt{layer}[0] \cup \dots \cup \texttt{layer}[r]]$}
\EndProcedure

\Procedure{RBS-GNN}{$G, f, g, b, r, k$} % \Comment{Estimation}
    \State $B_1, \dots, B_k \leftarrow$ \tsf{RandomBallSample}($G, b, r$) \Comment{Runs $k$ times to get $k$ balls.}
    \State $C_1, \dots, C_{N_C} \leftarrow$ \tsf{WeaklyConnectedComponents}($G[B_1\cup \dots B_k]$) 
    \State {\Return $g(\sum_j f(C_j))$}
\EndProcedure
\end{algorithmic}
\end{algorithm}

\paragraph{Relation to Existing GNN Models}
While RBS-GNN is motivated by the random neighborhood model, it has a strong connection with existing message-passing GNNs and optimization techniques in graph learning.
When we select $f$ to be a simple message-passing GNN, RBS-GNN is a generalization of the mini-batch version of GraphSAGE (Algorithm~2 in \citep{graphsage}), and the multi-layers perceptron module $g$ acts as the global \tsf{READOUT} as in the GIN~\cite{gin} architecture.
Our analysis technique also applies for other mini-batch approaches of popular graph learning models~\citep{gcn,gat,dgi}.
This result provides a theoretical validation for the mini-batch approach in practice.
Note that this is a positive result for continuous graph parameters, and we make no claim for the non-continuous case.
On the other hand, $f$ can also be a more expressive variant such as high-order WL~\citep{morris2020weisfeiler}, $k$-treewidth homomorphism density, or a universal approximator~\citep{keriven2019universal,hoang2020graph}.

% TODO: Equivalent with mini-batch version of GCN

%%%%%%%%%%%%%%%%%%%%%%%%%%%%%%%%%%%%%%%%%%%%%%%%%%
%%%%%%%%%%%%%%%%%%%%%%%%%%%%%%%%%%%%%%%%%%%%%%%%%%
\section{Main Result}
\label{sec:theory}

In this section, we conduct theoretical analyses of RBS-GNN for the graph classification problem.
All the proofs are in Appendix~\ref{appendix:proofs}.
To simplify the analysis, we assume the hyperparameters $k$, $b$, and $r$ have the same value, and by a slight abuse of notation, we denote these values by $r$.
Note that this setting would not alter the notion of estimability.
We further simplify the discussion by assuming the graphs have no vertex features.
Similar results hold when the vertices have finite-dimensional vertex features; see Appendix~\ref{appendix:feat}.
Additionally, as mentioned in Section~\ref{sec:intro}, we obtained complementary results for the vertex classification problem in Appendix~\ref{appendix:vertex}.

\subsection{Universality of RBS-GNN}
\label{ssec:universality}

We first characterize the expressive power of RBS-GNN.
The following shows the universality of RBS-GNN, with a universal GNN component $f$, in the space of the estimable functions.

\begin{theorem}[Universality of RBS-GNN] 
\label{thm:universal-rbs-gnn} 
If a graph parameter $p: \mathcal{G} \rightarrow \mathbb{R}$ is estimable (in the random neighborhood model), then it is estimable by an RBS-GNN  with a universal GNN $f$.
\end{theorem}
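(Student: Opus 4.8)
The plan is to show that any constant-time estimator $\mathcal{A}$ furnished by Definition~\ref{def:est} can be \emph{simulated} by an RBS-GNN whose sampling phase reproduces, in distribution, the query transcript of $\mathcal{A}$, and whose trainable modules $f,g$ reconstruct that transcript and evaluate $\mathcal{A}$ on it. Fix $\epsilon>0$ and let $\mathcal{A}$ be the estimator with query budget $N$ from the definition; without loss of generality its output is a deterministic function of its query transcript (any internal randomness can be fixed to a good seed, or folded into the query randomness). I would set the common RBS-GNN hyperparameter $r$ (playing the role of $k$, $b$, and the radius) to be at least $N$, so that the sampled structure is rich enough to contain every vertex and edge that $\mathcal{A}$ could ever touch.

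The central step is a coupling between the internal randomness of \tsf{RandomBallSample} and the query randomness of $\mathcal{A}$. Since $\mathcal{A}$ issues at most $N$ queries, it samples at most $N$ root vertices (matched to the $k\ge N$ i.i.d.\ uniform roots of the $k$ balls), it reaches depth at most $N\le r$ (contained in the radius-$r$ balls), and from any single vertex it calls \tsf{SampleNeighbor} at most $N\le b$ times; because each ball draws $b$ neighbors i.i.d.\ uniformly \emph{with replacement}, these draws serve, in order, as the answers to $\mathcal{A}$'s independent \tsf{SampleNeighbor} calls at that vertex, so the two randomness sources agree in distribution. Adaptivity is not an obstacle: the ball expands \emph{every} sampled vertex, so whichever obtained vertex $\mathcal{A}$ elects to expand is already present with $b$ fresh neighbor-samples available. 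All \tsf{IsAdjacent} answers $\mathcal{A}$ needs are among the sampled vertices, hence recorded in the induced subgraph $H = G[B_1\cup\dots\cup B_k]$ that the RBS-GNN forms. Under this coupling the transcript of $\mathcal{A}$, and therefore its output, is a deterministic function of $H$ up to isomorphism.

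It remains to realize this function with $g\left(\sum_j f(C_j)\right)$. The key simplification is that, for fixed $r$, the induced subgraph $H$ has bounded size, so there are only finitely many isomorphism types of $H$ and of its weakly connected components $C_1,\dots,C_{N_C}$. I would choose the universal GNN $f$ to map each component to a vector so that the pooled sum $\sum_j f(C_j)$ is \emph{injective} on the multiset of components (the standard sum-injectivity construction over a finite domain, realizable to arbitrary precision by a universal $f$); since $H$ is the disjoint union of its components, this sum determines $H$ up to isomorphism. I would then choose the MLP $g$ to decode $H$ and output the value $\mathcal{A}$ computes on $H$; as $g$ now acts on a finite input set, a sufficiently large MLP approximates this map to any precision $\delta$. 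Combining the two error sources, under the coupling one gets $|\text{RBS-GNN}(G)-p(G)| < \epsilon + \delta$ with probability at least $1-\epsilon$ for every $G$, and taking $\delta\le\epsilon$ yields the same guarantee as $\mathcal{A}$ up to replacing $\epsilon$ by $2\epsilon$; since $\epsilon$ is arbitrary this establishes estimability by an RBS-GNN with universal $f$.

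The step I expect to be the main obstacle is the coupling: one must argue carefully that the \emph{non-adaptive, fixed-arity} ball expansion of \tsf{RandomBallSample} contains, under a measure-preserving coupling, the \emph{adaptive} exploration of $\mathcal{A}$, with the ``$b$ neighbors with replacement'' sampling matching i.i.d.\ \tsf{SampleNeighbor} calls and with vertex coincidences across balls handled consistently by the induced-subgraph construction. The pooling step is comparatively routine once boundedness reduces everything to a finite domain.
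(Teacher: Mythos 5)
Your overall strategy---simulate the constant-time estimator on the union of random balls, then realize the resulting map with injective sum-pooling over a finite domain---is the same as the paper's, and your coupling of \tsf{RandomBallSample}'s randomness with the query randomness of $\mathcal{A}$ (i.i.d.\ roots for \tsf{SampleVertex}, the $b$ with-replacement draws for \tsf{SampleNeighbor}, the induced subgraph for \tsf{IsAdjacent}) matches the simulation used in the paper's Lemma~\ref{lem:canonical-estimable}. The final pooling step via an injective multiset encoding plus a decoder $g$ is equivalent to the paper's invocation of the Deep Sets decomposition.

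The gap is the sentence ``under this coupling the transcript of $\mathcal{A}$, and therefore its output, is a deterministic function of $H$ up to isomorphism.'' It is not. The transcript records which vertex was sampled first, which neighbor answered which call, and so on; two realizations of the sampling can produce isomorphic induced subgraphs $H$ while assigning these special roles to structurally different vertices, and an adaptive $\mathcal{A}$ may then output different values on the two transcripts. Since $f$ is isomorphism-invariant and $\sum_j f(C_j)$ is symmetric, the RBS-GNN can only compute functions of the multiset of isomorphism classes of the components, so ``the value $\mathcal{A}$ computes on $H$'' that you ask $g$ to output is not well defined. You also cannot simply replace it by the conditional expectation of $\mathcal{A}$'s output given the isomorphism class of $H$, because that conditional law depends on the unknown $G$. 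The paper closes exactly this hole with the canonical estimator of Lemma~\ref{lem:canonical-estimable}: run the original estimator at the boosted accuracy $\epsilon^2/2$, simulate it on the sampled subgraph under a uniformly random relabeling $\pi$ of its vertices, and output the average over $\pi$. The averaged estimator is isomorphism-invariant by construction, and the Markov inequality converts the expected error of order $\epsilon^2$ back into an $(\epsilon,\epsilon)$ guarantee; this quadratic bootstrapping is the price of restoring invariance and is the ingredient your argument is missing. (A minor further point: fixing the internal randomness of $\mathcal{A}$ to ``a good seed'' uniformly over all $G$ is not justified, but the same averaging disposes of this as well.)
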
 

The proof of this theorem is an adaptation of the proof techniques by \citet{czumaj2019testable}.
We first introduce a \emph{canonical estimator}, which is an algorithm in the random neighborhood model defined by the following procedure.
(1) Sample $r$ random balls $B_1, \dots, B_r$ using \tsf{RandomBallSample}($G$, $r$, $r$);
(2) Return a number according to the isomorphism class of the subgraph $G[B_1 \cup \dots \cup B_r]$ induced by the balls.
Since the number of random balls, the branching factor, and the radius are constant, we can see that the size of $G[B_1 \cup \dots \cup B_r]$ is bounded by $r^{r+2}$.
Therefore, we can list all isomorphism classes of all graphs having at most $r^{r+2}$ vertices and assign a unique number to each of them.
Also, since the induced subgraph is bounded, it is possible to construct a universal approximator GNN~\citep{keriven2019universal,hoang2020graph}.
Therefore, we obtained the following.

\begin{lemma}
\label{lem:canonical-estimable}
If a graph parameter $p$ is estimable, then it is estimable by a canonical estimator.
\end{lemma}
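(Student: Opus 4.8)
The plan is to show that any estimable graph parameter $p$ can be approximated by a \emph{canonical estimator}, i.e.\ an estimator whose output depends only on the isomorphism class of the induced subgraph on $r$ randomly sampled balls. The key conceptual point is that the canonical estimator is, in a sense, the \emph{most informative} algorithm one can build from a fixed budget of queries in the random neighborhood model: any sequence of \tsf{SampleVertex}, \tsf{SampleNeighbor}, and \tsf{IsAdjacent} queries ultimately reveals nothing more than the isomorphism type of the explored subgraph. So the first step is to argue that an arbitrary estimator $\mathcal{A}$ using at most $N$ queries can be \emph{simulated} by first drawing a sufficiently rich random neighborhood and then post-processing. Concretely, I would fix $\epsilon > 0$, take the algorithm $\mathcal{A}$ and bound $N$ guaranteed by Definition~\ref{def:est}, and then choose the radius/branching/count parameter $r$ large enough that \tsf{RandomBallSample}$(G, r, r)$ (run $r$ times) explores a neighborhood that, with probability at least $1-\epsilon'$, contains all the vertices and adjacency information that $\mathcal{A}$ could possibly touch within $N$ queries.

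The second step is the coupling argument: I would construct a joint distribution over the randomness of $\mathcal{A}$ and the randomness of the canonical sampling so that, on a high-probability event, the vertices and edges seen by $\mathcal{A}$ form a sub-configuration of $G[B_1 \cup \dots \cup B_r]$. Since $N$ is constant and each query touches at most a constant amount of the graph, the information accessible to $\mathcal{A}$ is determined by the isomorphism class of the explored region. Hence one can define the canonical estimator's output map from isomorphism classes to $\mathbb{R}$ by running (an emulation of) $\mathcal{A}$ on the sampled subgraph. Because the induced subgraph has at most $r^{r+2}$ vertices, there are only finitely many isomorphism classes, so this output map is well-defined and finitely specified, exactly as claimed in the text. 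Combining the failure probability of the coupling with the $\epsilon$-accuracy guarantee of $\mathcal{A}$ via a union bound yields that the canonical estimator achieves accuracy $O(\epsilon)$ with probability $1 - O(\epsilon)$, which after rescaling gives estimability by a canonical estimator.

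I expect the main obstacle to be the coupling step, specifically handling the mismatch between \tsf{SampleNeighbor}, which samples a \emph{single} neighbor, and the \tsf{RandomBallSample} procedure, which samples $b$ neighbors with replacement at each vertex. The difficulty is that $\mathcal{A}$ may adaptively choose \emph{which} already-obtained vertex to expand next, whereas the canonical sampler expands all of them in a fixed breadth-first pattern. I would resolve this by noting that since $\mathcal{A}$ makes at most $N$ queries, its entire explored subtree has depth and branching bounded by $N$, so taking $r \ge N$ ensures the canonical ball is rich enough to embed any such adaptive exploration; the ``with replacement'' sampling only inflates the number of drawn vertices and never loses reachable structure, so the relevant high-probability event (that every neighbor $\mathcal{A}$ would query has been sampled by the canonical procedure) can be made to hold with probability at least $1-\epsilon'$ by choosing the branching factor large. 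The remaining verification—that \tsf{IsAdjacent} queries are answered consistently because we reconstruct the full induced subgraph $G[B_1 \cup \dots \cup B_r]$—is then routine. Once Lemma~\ref{lem:canonical-estimable} is in place, Theorem~\ref{thm:universal-rbs-gnn} follows by replacing the finite output table of the canonical estimator with a universal GNN $f$ applied to each connected component, whose existence is guaranteed by the boundedness of the induced subgraph.
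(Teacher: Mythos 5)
Your overall strategy---simulate the given estimator $\mathcal{A}$ on top of a sufficiently rich \tsf{RandomBallSample} draw, coupled so that everything $\mathcal{A}$ could touch in $N$ queries is contained in $G[B_1\cup\dots\cup B_r]$ with high probability---is exactly the paper's strategy. However, there is a concrete gap in the final step where you ``define the canonical estimator's output map from isomorphism classes to $\mathbb{R}$ by running (an emulation of) $\mathcal{A}$ on the sampled subgraph.'' A canonical estimator must return a \emph{deterministic} number per isomorphism class (this is what lets Theorem~\ref{thm:universal-rbs-gnn} replace it by a GNN later), but the emulated run of $\mathcal{A}$ depends on two extra sources of randomness that are not functions of the isomorphism class: the particular labelling/ordering of the sampled vertices (which determines how the queries of $\mathcal{A}$ get matched to the sampled balls) and the internal coin flips of $\mathcal{A}$ itself. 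Fixing a single representative and a single seed does not work either, because $\mathcal{A}$'s accuracy guarantee is only distributional and a fixed seed may land in the failure event. So the object you construct is a randomized post-processing of the sample, not a canonical estimator.

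The paper closes this gap with a derandomization-plus-amplification step that your proposal is missing: it first runs the original estimator at the tighter accuracy $\epsilon^2/2$, simulates it via a random permutation $\pi$ of the sampled vertices (your coupling), and then defines the canonical output as the \emph{expectation} of the simulated output over all $\pi$ (and internal coins), which is a genuine function of the isomorphism class. Averaging destroys the ``with probability $1-\epsilon$'' form, which is why the $\epsilon^2$ budget is needed: the expected error is at most $\epsilon^2$, and Markov's inequality then recovers error at most $\epsilon$ with probability at least $1-\epsilon$. Your union-bound conclusion would be fine once this averaging step is inserted; without it, the lemma as stated is not actually established.
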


Since a canonical estimator assigns a number according to the isomorphism class of the input, we see that RBS-GNN can approximate the canonical estimator by letting $f$ be a universal approximator for bounded graphs.
See the proof in Appendix~\ref{appendix:proofs-rbsgnn} for more detail.

%\begin{rem}
%\citet{czumaj2019testable} introduced a slightly different computational model that consists of the same queries as the above but can only observe the edges explored during the RandomBallSample.
%They proved a parameter is testable if and only if it is testable by a canonical tester.
%Our proof is a estimation version of their proof.
%%The difference of the computational model just introduces a different topology, which is insignificant in the proof.
%\end{rem}

\paragraph{Relation to Universality Results}
Existing universal GNNs assumed that the number of vertices of the input graphs are bounded.
Theorem~\ref{thm:universal-rbs-gnn} shows that these universal GNNs for bounded graphs can be extended to general graphs by approximating the general graphs using the random balls sampling procedure. 
As a drawback, the theorem is only applicable to the continuous functions in the randomized Benjamini--Schramm topology introduced below.
We emphasize that this drawback shows the limitation of the partial-observation (random neighborhoods) setting.
%Note that if input graphs have vertex features and have the bounded number of vertices, then the continuity in the randomized Benjamini--Schramm topology coincides with the standard continuity used in the existing studies; see Appendix~\ref{appendix:feat}.

\subsection{Topology of Graph Space: Estimability is Uniform Continuity}
\label{ssec:topology}

The previous section defined the estimability by the existence of an estimation algorithm.
Such definition is suitable for algorithmic analysis; however, it is not suitable for further analysis, such as deriving the generalization error bounds.
This section rephrases our estimability by the continuity in a new topology induced by a distance between two graphs.

We start with a simple example in Figure~\ref{fig:zprofile_example}.
The figure shows an input graph $G$ consisting of two isolated vertices and one edge.
For simplicity, Algorithm~\ref{algo:practical-rbsgnn} only samples a single ball ($k=1$) with radius two ($r=2$).
This configuration let us obtain two isomorphism classes with equal probability: a single vertex and a single edge.
Hence, the event which each isomorphism class is obtained from Algorithm 1 can be represented by a two-dimensional vector.
We denote this vector $Z_2(G)$, which takes value $[0,1]$ when the single vertex is sampled and value $[1,0]$ when the single edge is sampled.
Clearly, this is a random vector whose expectation defines a distribution over the isomorphism classes with respect to the configuration of Algorithm 1.
More generally, we can define the $r$-profile $Z_r(G)$ and the corresponding isomorphism class distribution $z_r(G)$ for positive integer values of radius $r$.

\begin{figure}[t]
    \centering
    \includegraphics[width=\textwidth]{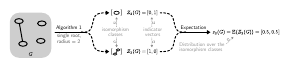}
    \vspace{-2em}
    \caption{An illustrating example for the $r$-profile with $k=1, r=2$, and a simple graph $G$.}
    \label{fig:zprofile_example}
\end{figure}

%\hoang{TODO: Add intuition/example for the sampling distance.}
For an integer $r$, an \emph{$r$-profile} $Z_r(G)$ of a graph $G$ is a random variable of the ($k$-rooted\footnote{For simplicity, we let $k=r$.}) isomorphism class of $G[B_1 \cup \dots \cup B_r]$, where each $B_j$ is obtained from \tsf{RandomBallSample}($G$, $r$, $r$).
As \tsf{RandomBallSample}($G$, $r$, $r$) produces a graph of size at most $r^r$, we can identify $Z_r(G)$ as a random finite-dimensional vector.
%Let $z_r(G) = \E[Z_r(G)]$ be the probability distribution over the isomorphism classes (in terms of the $k$-rooted graph isomorphism).
Let $z_r(G) = \E[ Z_r(G) ]$ be the probability distribution over the isomorphism classes in terms of the $k$-rooted graph isomorphism, where the expectation is taken over \tsf{SampleVertex} and \tsf{SampleNeighbor}.
The \emph{sampling distance} of two graphs is defined by
\begin{align}
    d(G, H) = \sum_{r=1}^\infty 2^{-r} d_{TV}(z_r(G), z_r(H)),
\end{align}
where $d_{TV}$ is the total variation distance of two probability distributions given by $d_{TV}(p, q) = (1/2) \| p - q \|_1$.
It should be emphasized that the sampling distance allows us to compare any two graphs even though they have a different number of vertices.
We call the topology on the set of all graphs $\mathcal{G}$ induced by this sampling distance \emph{randomized Benjamini--Schramm topology}.
A graph parameter is defined to be \emph{uniformly continuous in the randomized Benjamini--Schramm topology} by the followings.

\begin{definition}[Uniform Continuity in the RBS Topology]
\label{def:uniform-continuous}
A graph parameter $p: \mathcal{G} \rightarrow \mathbb{R}$ (resp. a randomized algorithm $\mathcal{A}$) is uniformly continuous if for any $\epsilon > 0$ there exists $\delta > 0$ such that for any $G$ and $H$, $d(G, H) \le \delta$ implies $|p(G) - p(H)| < \epsilon$ (resp. $|\mathcal{A}(G) - \mathcal{A}(H)| \le \epsilon$ with probability at least $1 - \epsilon$).
\end{definition}
This topology connects the estimability in terms of the continuity as follows.

\begin{theorem}
\label{thm:estimability-equiv-continuity}
A graph parameter $p$ is estimable in the random neighborhood model if and only if it is uniformly continuous in the randomized Benjamini--Schramm topology.
\end{theorem}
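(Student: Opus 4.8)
The plan is to prove both implications by exploiting the fact that the profile map $\Phi: G \mapsto (z_r(G))_{r \ge 1}$ embeds $\mathcal{G}$ into the product space $\mathcal{K} = \prod_{r \ge 1} \Delta_r$, where $\Delta_r$ is the finite-dimensional simplex of probability distributions over the isomorphism classes of $k$-rooted graphs on at most $r^r$ vertices, and that the sampling distance $d$ is exactly the pullback under $\Phi$ of the metric $\sum_r 2^{-r} d_{TV}(\cdot,\cdot)$ on $\mathcal{K}$. Since each $\Delta_r$ is compact and the weighted sum induces the product topology, $\mathcal{K}$ is a compact metric space, so $(\mathcal{G}, d)$ is totally bounded. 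This one observation drives both directions.

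For the forward direction (estimable $\Rightarrow$ uniformly continuous), I would fix $\epsilon \in (0,1)$ and apply Lemma~\ref{lem:canonical-estimable} to get a canonical estimator $\mathcal{A}$ of some radius $r$ with $|\mathcal{A}(G) - p(G)| < \epsilon/3$ with probability at least $1 - \epsilon/3$ for every $G$. Because a canonical estimator returns a number depending only on the isomorphism class of $G[B_1 \cup \cdots \cup B_r]$, the law of its output is a function of $z_r(G)$ alone. I then set $\delta = 2^{-r}\epsilon/3$; if $d(G,H) \le \delta$ then $d_{TV}(z_r(G), z_r(H)) \le 2^r d(G,H) \le \epsilon/3$, so the two sampling experiments admit a coupling producing identical isomorphism classes, hence identical outputs, with probability at least $1 - \epsilon/3$. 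A union bound makes the three events $|\mathcal{A}(G)-p(G)|<\epsilon/3$, $|\mathcal{A}(H)-p(H)|<\epsilon/3$, and $\mathcal{A}(G)=\mathcal{A}(H)$ hold simultaneously with positive probability; since $|p(G)-p(H)|$ is deterministic, the triangle inequality forces $|p(G)-p(H)| < \epsilon$ outright. As $r$ and hence $\delta$ depend only on $\epsilon$, this is exactly uniform continuity in the sense of Definition~\ref{def:uniform-continuous}.

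For the converse (uniformly continuous $\Rightarrow$ estimable), I would fix $\epsilon>0$, take the modulus $\delta$ from Definition~\ref{def:uniform-continuous}, and split the budget into quarters. First choose $R$ with $2^{-R} < \delta/4$, so truncating $d$ at level $R$ costs at most $\delta/4$. By total boundedness select finitely many graphs $H_1,\dots,H_M$ whose profiles form a $(\delta/4)$-net of $\Phi(\mathcal{G})$, and precompute the constants $p(H_i)$ and the truncated profiles $z_r(H_i)$ for $r \le R$; all of this is part of the graph-size-independent description of the algorithm. The estimator draws enough independent balls from \tsf{RandomBallSample} to form empirical profiles $\hat z_r$ with $\sum_{r \le R} 2^{-r} d_{TV}(\hat z_r, z_r(G)) < \delta/4$ with probability at least $1-\epsilon$ — a constant number of queries, since each profile lives in a fixed finite simplex and standard multinomial concentration controls the sample size — and then outputs $p(H_i)$ for the $i$ minimizing $\sum_{r \le R} 2^{-r} d_{TV}(\hat z_r, z_r(H_i))$. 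On the good event, the empirical truncated distances lie within $\delta/4$ of the true ones, the net supplies some $H_j$ with $d(G,H_j) \le \delta/4$, and chaining these bounds with the truncation error yields $d(G,H_i) \le \delta$ for the selected index, whence $|p(H_i) - p(G)| < \epsilon$ by uniform continuity.

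The main obstacle is the converse, and within it the passage from the abstract continuity hypothesis to a genuinely constant-query algorithm: one must (i) justify total boundedness of $(\mathcal{G}, d)$ so that a finite net with honest graph representatives exists, (ii) control the truncation of the infinite sum defining $d$, and (iii) calibrate the per-profile sample sizes so that all $R$ empirical total-variation errors are small simultaneously while keeping the total query count independent of $|V(G)|$. Steps (ii) and (iii) are routine tail and concentration estimates; step (i) is the conceptual crux and rests entirely on the compact-product embedding above. A technical point to handle with care is that the net must be taken inside the image $\Phi(\mathcal{G})$ rather than in the full closure $\mathcal{K}$, so that each representative $H_i$ is an actual finite graph and the precomputed constant $p(H_i)$ is well defined.
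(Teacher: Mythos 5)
Your proposal is correct and follows essentially the same route as the paper: the forward direction via a canonical estimator, the triangle inequality, and the optimal coupling theorem applied to $d_{TV}(z_r(G), z_r(H))$; the converse via the totally-bounded $\delta$-net of Lemma~\ref{lem:totally-bounded-epsilon-net} and a nearest-neighbor lookup on empirically estimated truncated profiles. Your observation that $|p(G)-p(H)|$ is deterministic, so the three good events need only hold simultaneously with positive probability, is a slightly cleaner way to close the forward direction than the paper's appeal to taking expectations, but it is the same argument.
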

The ``if'' direction of this theorem is given by the triangle inequality and the (optimal) coupling theorem~\citep{cuestaalbertos1993optimal}; the ``only-if'' is proved using the fact that the graph space is totally bounded as follows.
\begin{lemma}[Totally Boundedness of Graph Space]
\label{lem:totally-bounded-epsilon-net}
For any $\epsilon > 0$, there exists a set of graphs $\{ H_1, \dots, H_C \}$ with 
$C \le 
    2^{
        2^{
            (\log 1/\epsilon)^{
                O(\log 1/\epsilon)
            }
        }
    }$
such that $\min_{j \in \{1, \dots, C \}} d(G, H_j) \le \epsilon$ for all $G$.
\end{lemma}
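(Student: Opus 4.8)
The plan is to establish total boundedness of $(\mathcal{G}, d)$ directly by constructing an explicit finite $\epsilon$-net, where each representative is characterized by a \emph{quantized truncated profile}. First I would exploit the geometric weighting in the sampling distance: since $d(G,H) = \sum_{r=1}^\infty 2^{-r} d_{TV}(z_r(G), z_r(H))$ and $d_{TV} \le 1$, the tail $\sum_{r > R} 2^{-r} \le 2^{-R}$ is negligible. Choosing $R = \lceil \log_2(2/\epsilon) \rceil$ ensures the tail contributes at most $\epsilon/2$, so it suffices to control the truncated distance $\sum_{r=1}^R 2^{-r} d_{TV}(z_r(G), z_r(H)) \le \epsilon/2$. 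Thus two graphs are $\epsilon$-close whenever their finitely many profiles $z_1, \dots, z_R$ are close in total variation.

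The next step is to bound the effective dimensionality of the problem. For each $r \le R$, the profile $z_r(G)$ is a probability distribution over the set of $k$-rooted isomorphism classes realizable by \tsf{RandomBallSample}($G$, $r$, $r$); by the size bound mentioned for the canonical estimator, each such induced subgraph has at most $r^{r}$ vertices (and at most $r^{r+2}$ in the $k$-ball union), so the number $M_r$ of distinct isomorphism classes is at most the number of graphs on $r^{r+2}$ labelled vertices, which is bounded by $2^{(r^{r+2})^2} = 2^{r^{2r+4}}$. Summing over $r \le R$, the total number of coordinates across all profiles is at most $M := \sum_{r=1}^R M_r \le R \cdot 2^{R^{2R+4}}$, a quantity doubly-exponential in $R$, hence doubly-exponential in $\log(1/\epsilon)$.

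Then I would construct the net by \emph{quantizing} each of these $M$ coordinates. Pick a grid of spacing $\eta = \epsilon / (4R)$ on $[0,1]$, giving roughly $1/\eta$ grid values per coordinate; rounding each entry of each $z_r$ to the nearest grid point changes each $d_{TV}$ by at most $M_r \eta / 2$, and after weighting by $2^{-r}$ and summing the perturbation stays below $\epsilon/2$ for appropriate $\eta$. The representatives $H_1, \dots, H_C$ are then chosen as one graph per achievable quantized profile vector. The count $C$ is at most the number of grid points raised to the number of coordinates, namely $C \le (1/\eta)^{M} = (4R/\epsilon)^{M}$. Substituting $R = O(\log 1/\epsilon)$ and the doubly-exponential bound on $M$, a routine logarithmic computation yields $\log\log C = (\log 1/\epsilon)^{O(\log 1/\epsilon)}$, which rearranges to the stated bound $C \le 2^{2^{(\log 1/\epsilon)^{O(\log 1/\epsilon)}}}$. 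By construction, any graph $G$ has all its profiles within $\eta$ of some grid point, so $\min_j d(G, H_j) \le \epsilon$.

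The main obstacle I anticipate is bookkeeping the combination of three nested error budgets — the profile truncation at level $R$, the total-variation quantization error per profile, and the $2^{-r}$ reweighting — so that their sum provably stays below $\epsilon$ while simultaneously keeping the bound on $M$ tight enough to produce exactly the triple-exponential form. In particular, one must verify that the crude bound $M_r \le 2^{r^{2r+4}}$ feeds correctly through $\log C \approx M \log(1/\eta)$ so that the innermost exponent is $(\log 1/\epsilon)^{O(\log 1/\epsilon)}$ rather than something larger; this requires checking that $r^{2r+4}$ summed up to $r = R = O(\log 1/\epsilon)$ indeed lands in the claimed exponent class, which is the one genuinely delicate calculation. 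A secondary subtlety is ensuring every quantized profile vector in the net is actually \emph{realizable} by some graph, or else weakening the claim to cover the net with representatives only for achievable profiles — but since we only need each $G$ to be near some $H_j$, it suffices to take the $H_j$ among actual graphs, and the count $C$ can only decrease.
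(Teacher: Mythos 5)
Your proposal is essentially the paper's own argument: truncate the sampling distance at level $R = O(\log 1/\epsilon)$ using the geometric weights, identify each graph with the tuple $(z_1(G),\dots,z_R(G))$ of points in simplices of dimension $2^{(\log 1/\epsilon)^{O(\log 1/\epsilon)}}$, and bound the covering number of that product of simplices to get the triple exponential. The paper takes a maximal $\epsilon/4$-separated family and invokes the packing number $(1/\epsilon)^{O(d)}$ of the $d$-simplex, whereas you build the cover by explicit coordinate quantization; these are the same computation via packing--covering duality. One quantitative slip: your grid spacing $\eta = \epsilon/(4R)$ is too coarse, since rounding an $M_r$-coordinate probability vector coordinatewise incurs a total-variation error of order $M_r\eta$ and $M_r$ is doubly exponential in $r$ --- you need $\eta$ on the order of $\epsilon/(R\,M_r)$, but this only multiplies $\log C$ by $\log M_r = (\log 1/\epsilon)^{O(\log 1/\epsilon)}$, which is absorbed in the stated bound, so the conclusion is unaffected.
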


Theorem~\ref{thm:estimability-equiv-continuity} allows us to apply existing ``functional analysis techniques'' to analyze the estimable functions.
We present such applications in Section~\ref{sec:applications}.

%\paragraph{Intuition of the Sampling Distance}
\paragraph{Intuition of the Sampling Distance and Relation to Benjamini--Schramm Topology}

The intuition behind our definition of the sampling distance reflects the idea that two graphs are similar when random samples from these graphs look similar,
where the similarities on small samples are more important to the similarities on large samples.
%
%\paragraph{Relation to Benjamini--Schramm Topology}
This definition generalizes the Benjamini--Schramm topology for the space $\mathcal{G}_D$ of all graphs of degree bounded by $D$, which uses a different definition of the ``$r$-profile:''
Let us define the $r$-profile by the union of $r$ balls of radius $r$ whose centers are sampled randomly. 
Then, the sampling distance defined using this $r$-profile induces a topology called the \emph{Benjamini--Schramm topology}.
This topology was first studied by Benjamini and Schramm~\cite{benjamini2011recurrence} to analyze the planar packing problem,
and now it is widely used to analyze the limit of bounded degree graphs, where the limit object is identified as the graphing; see \cite{lovasz2006limits}.
A practical issue of the 
Benjamini--Schramm topology is that it is only applicable to bounded degree graphs, where many real-world extremely large graphs are complex networks having power-law degree distributions (i.e., unbounded degree).
We addressed this issue by introducing the randomized Benjamini--Schramm topology, which is applicable to all graphs.
%it is the same motivation in \cite{czumaj2019testable}.

%%%%%%%%%%%%%%%%%%%%%%%%%%%%%%%%%%%%%%%%%%%%%%%%%%
%%%%%%%%%%%%%%%%%%%%%%%%%%%%%%%%%%%%%%%%%%%%%%%%%%
\section{Theoretical Applications}
\label{sec:applications}
%\subsection{Estimable Parameters in Practice}
%
%In practice, we cannot assume $f_r$ is universal due to several time and scalability constraints.
%
%
%\begin{table}[h]
%    \centering
%    \begin{tabular}{lcc}
%    \toprule
%       $f_r$ & tree depth & average degree   \\
%    \midrule
%       $r$-universal & \ms{99.73}{0.43} & \ms{99.89}{0.32} \\
%    \bottomrule
%    \end{tabular}
%    \caption{Over-fitting experiment using multiple samples of radius 2.}
%    \label{tab:compare-gnn}
%\end{table}

\subsection{Robustness Against Perturbation}

The continuity immediately implies the robustness against the structural perturbation, i.e., for any $\epsilon > 0$ there exists $\delta > 0$ such that
the output of RBS-GNN does not change more than $\epsilon$ if the graph is perturbed at most $\delta$ in the sampling distance.
As the perturbation in sampling distance may not be intuitive in practice, we here provide a bound regarding the additive perturbation edges.

\begin{proposition}
\label{prop:perturbation}
Let $G$ be a graph and let $G'$ be the graph obtained from $G$ by adding $\delta |V(G)|$ edges completely randomly where $0 < \delta < 1$.
Then $d(G, G') = O(1 / \log (1 / \delta))$.
\end{proposition}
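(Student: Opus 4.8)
The plan is to split the defining series of the sampling distance at a threshold $R$ (to be chosen at the end), bound the tail trivially, and control the finitely many head terms by a coupling argument. Writing $n := |V(G)|$ and noting we may assume $\delta n \ge 1$ (otherwise $G' = G$ and $d(G,G')=0$), and using $d_{TV} \le 1$, I would begin from
\[
    d(G,G') = \sum_{r=1}^{R} 2^{-r}\, d_{TV}(z_r(G), z_r(G')) + \sum_{r=R+1}^{\infty} 2^{-r}\, d_{TV}(z_r(G), z_r(G')) \le \sum_{r=1}^{R} 2^{-r}\, d_{TV}(z_r(G), z_r(G')) + 2^{-R},
\]
so that everything reduces to estimating $d_{TV}(z_r(G), z_r(G'))$ for the $O(\log\log(1/\delta))$ small radii.

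The main work is a coupling bound of the form $d_{TV}(z_r(G), z_r(G')) = O(\delta\, r^{O(r)})$. Let $M_r \le r^{r+2}$ denote the total number of \tsf{SampleNeighbor} operations used to build the $k=r$ balls of radius and branching $r$ (the same counting that yields the $r^{r+2}$ size bound in the proof of Theorem~\ref{thm:universal-rbs-gnn}). Since $G \subseteq G'$, a uniform neighbour of $u$ in $G'$ can be drawn by tossing a coin: with probability $d_u/d'_u$ pick a uniform \emph{old} neighbour, and with probability $\delta_u/d'_u$ pick a uniform \emph{new} neighbour, where $d_u,d'_u$ are the degrees of $u$ in $G,G'$ and $\delta_u = d'_u - d_u$. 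I would run the two ball-samplings with shared coins so that they take identical steps until the first time the $G'$-sampler picks a new neighbour. Letting $F$ be the event that some new edge has both endpoints in the sampled vertex set $S$, one checks that on $\neg F$ no new edge is ever traversed \emph{and} the induced subgraphs coincide ($G'[S]=G[S]$), so the two outputs are isomorphic; by the coupling inequality $d_{TV}(z_r(G), z_r(G')) \le \Pr[F]$.

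It remains to bound $\Pr[F]$. Conditioning on $G$ together with the sampling coins fixes the $G$-run's visited sequence $u_1,u_2,\dots$ and its vertex set $S$, all of which are independent of which new edges were added. A union bound then gives $\Pr[\text{traverse a new edge}\mid\text{old data}] \le \sum_s \delta_{u_s}/d'_{u_s}$, and here lies the step I expect to be the main obstacle: bounding $\E[\delta_u/d'_u]$ for a fixed visited $u$. Because the $\delta n$ edges are added \emph{completely at random} and independently of the structure of $G$ and of the exploration coins, $\delta_u$ is, by symmetry over vertices, a random variable with $\E[\delta_u] = 2\delta n / n = 2\delta$ for every $u$. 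Combining this with the elementary estimate $\delta_u/d'_u \le \delta_u/d_u \le \delta_u$ when $d_u \ge 1$, and $\delta_u/d'_u = \mathbbm{1}[\delta_u \ge 1]$ when $d_u = 0$ (so that $\E[\delta_u/d'_u] = \Pr[\delta_u \ge 1] \le \E[\delta_u]$ by Markov), yields $\E[\delta_u/d'_u] \le 2\delta$ in every case. Summing over the $\le M_r$ picks bounds the traversal contribution by $2\delta M_r$; the remaining ``induced new edge in $S$'' contribution is $\le \binom{|S|}{2}\cdot O(\delta/n) = O(\delta^2 r^{O(r)})$ using $n \ge 1/\delta$, a genuinely lower-order term. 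Hence $d_{TV}(z_r(G), z_r(G')) = O(\delta\, r^{O(r)})$. I want to emphasize that this estimate really uses the randomness of the perturbation: an adversary concentrating all new edges on a few high-degree, frequently-revisited vertices would force the walk to hit them with non-negligible probability, so the independence of $\delta_u$ from the exploration is exactly what keeps the per-step failure probability at $O(\delta)$.

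Finally I would substitute and balance. Since the summands $2^{-r}r^{r+2}$ are increasing in $r$, the head is at most $R$ times its last term, giving a bound of order $\delta\, R^{R+3} 2^{-R}$. Choosing $R = \lceil \log_2\log_2(1/\delta)\rceil = \Theta(\log\log(1/\delta))$ makes the tail $2^{-R} = O(1/\log(1/\delta))$, while $R^{R+3} = 2^{o(\log(1/\delta))}$ and $\delta = 2^{-\log_2(1/\delta)}$ force the head term $\delta\, R^{R+3}2^{-R}$ to $0$ faster than any inverse power of $\log(1/\delta)$. Adding the two pieces gives $d(G,G') = O(1/\log(1/\delta))$, as claimed; note that the slow $1/\log(1/\delta)$ rate is dictated by the geometric weights $2^{-r}$ and by how large $R$ we can afford before the factorially growing $r^{r+2}$ overwhelms $\delta$.
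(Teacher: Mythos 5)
Your proof is correct and follows essentially the same route as the paper's: couple the ball-sampling processes on $G$ and $G'$, observe that they produce isomorphic outputs unless the sample touches an endpoint of a new edge, use the uniform randomness of the added edges to bound this failure probability by $O(\delta\, r^{O(r)})$ per radius, and then truncate the defining series at $r \approx \log\log(1/\delta)$ to balance the head against the $2^{-R}$ tail. Your treatment of the coupling (separating new-edge traversal from induced-edge discrepancies, and the explicit remark that adversarial placement of edges on high-degree vertices would break the bound) is a more detailed rendering of exactly the argument and caveat the paper gives.
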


%This result is obtained by applying the optimal coupling theorem.
This result indicates that to change the output of RBS-GNN, one needs to add linearly many random edges; it is impractical in extremely large graphs.
Note that the ``adversarial'' perturbation can change the distance more easily, especially if there is a ``hub'' in the graph;
see Appendix~\ref{appendix:proofs-app} for details.

\subsection{Rademacher Complexity}

Thus far, we only discussed the expressibility of the functions regardless of the learnability.
Here, we derive the Rademacher complexity for the class of Lipschitz functions in the random Benjamini--Schramm topology.
This gives an algorithm-independent bound of the learnability of the functions.
\begin{theorem}
\label{thm:rademacher-complexity}
Let $n$ be the number of training instances.
The Rademacher complexity $R_n$ of the set of $1$-Lipschitz functions that maps to $[0, 1]$ is $(\log \log n)^{-O(1/\log \log \log \log n)}$. 
It is $o(1/\log \log \log n)$.
\end{theorem}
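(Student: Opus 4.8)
The plan is to bound $R_n$ through the metric entropy of the graph space, exploiting that the total boundedness of Lemma~\ref{lem:totally-bounded-epsilon-net} transfers to a sup-norm covering of the $1$-Lipschitz function class. First I would fix a scale $\epsilon > 0$ and take the $\epsilon$-net $\{H_1, \dots, H_C\}$ from Lemma~\ref{lem:totally-bounded-epsilon-net}, with $C = C(\epsilon) \le 2^{2^{(\log 1/\epsilon)^{O(\log 1/\epsilon)}}}$. For a $1$-Lipschitz $f \colon \mathcal{G} \to [0,1]$, assign to each $G$ its nearest net point $H_{j(G)}$ and define $\tilde{f}(G)$ to be $f(H_{j(G)})$ rounded to the nearest multiple of $\epsilon$. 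Lipschitzness and the net property give $|\tilde{f}(G) - f(G)| \le \epsilon/2 + d(G, H_{j(G)}) \le 3\epsilon/2$, so $\tilde{f}$ is a sup-norm $(3\epsilon/2)$-approximation of $f$. Since $\tilde{f}$ is fully determined by the $C$-tuple of rounded values in $\{0, \epsilon, \dots, 1\}$, the sup-norm covering number of the function class satisfies $N_\infty(3\epsilon/2) \le (1/\epsilon + 1)^{C(\epsilon)}$.

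With a finite sup-norm net in hand, I would apply the standard single-scale (Massart) bound: approximating each $f$ by its net element costs at most $3\epsilon/2$ in the Rademacher supremum, and Massart's finite-class lemma controls the net itself, yielding
\begin{align}
R_n \;\le\; \frac{3\epsilon}{2} + \sqrt{\frac{2\log N_\infty(3\epsilon/2)}{n}} \;\le\; \frac{3\epsilon}{2} + \sqrt{\frac{2\,C(\epsilon)\log(1/\epsilon+1)}{n}}.
\end{align}
Because the metric entropy here grows like a tower of exponentials as $\epsilon \to 0$, full chaining (Dudley's integral) offers no asymptotic improvement over this one-shot truncation, so the single-scale bound is the natural tool.

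The remaining task --- and the main obstacle --- is to choose $\epsilon$ optimally and to extract the precise iterated-logarithm rate from the tower-type bound on $C(\epsilon)$. I would balance the two terms by forcing the second to be at most the first, i.e.\ requiring $C(\epsilon) \lesssim n\epsilon^2$; since the factors $\epsilon^2$ and $\log(1/\epsilon)$ are negligible against the tower, this reduces to $2^{2^{(\log 1/\epsilon)^{O(\log 1/\epsilon)}}} \le n$. Writing $x = \log(1/\epsilon)$ and taking logarithms three times turns this into $x\log x \lesssim \log\log\log n$, whose solution is $x \asymp (\log\log\log n)/(\log\log\log\log n)$. Substituting $\epsilon = 2^{-x}$ and rewriting in terms of $m = \log\log n$ (so that $x \asymp (\log m)/(\log\log m)$) gives $\epsilon \asymp m^{-1/\log\log m} = (\log\log n)^{-1/\log\log\log\log n}$, which is the claimed $R_n = (\log\log n)^{-O(1/\log\log\log\log n)}$. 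Finally, to verify $R_n = o(1/\log\log\log n)$ I would check that $\epsilon \cdot \log\log\log n \to 0$: taking logarithms, $\log\log m - (\log m)/(\log\log m) \to -\infty$ because the subtracted term dominates, which closes the argument. The delicate part throughout is the careful bookkeeping of the four nested logarithms when solving the transcendental balancing equation and when confirming the $o(\cdot)$ claim.
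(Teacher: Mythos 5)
Your proposal is correct and follows essentially the same route as the paper's proof: both build a sup-norm cover of the $1$-Lipschitz class by piecewise-constant, value-discretized functions on the Voronoi cells of the graph-space $\epsilon$-net from Lemma~\ref{lem:totally-bounded-epsilon-net}, plug the resulting covering number $(1/\epsilon)^{C(\epsilon)}$ into the single-scale covering/Massart bound, and balance by solving $x \log x \asymp \log\log\log n$ (the paper does this via the Lambert $W$ function, you via the equivalent asymptotic $x \asymp y/\log y$). The only cosmetic difference is your explicit remark that chaining gives no improvement, which the paper omits.
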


This result implies that, by minimizing the empirical error of $n$ instances, we can achieve the generalization gap of $o(1/\log \log \log n)$ with high probability.
To the extent of our knowledge, this is the first Rademacher bound for the general graph space, which guarantees the asymptotic convergence on any graph learning problem without assuming any graph structure.

\paragraph{Comparison with Existing Results}
The significant difference between existing studies~\citep{scarselli2018vapnik,verma2019stability,gntk,lv2021generalization,garg2020generalization,liao2021pac} and our bound (Theorem~\ref{thm:rademacher-complexity}) is that 
ours is independent of any structural property, such as the maximum number of vertices, the maximum degree, and the spectrum of the graphs.
Thus, ours can be applied to any graph distribution.
Simply put, this is a consequence of the totally boundedness of the graph space (Lemma~\ref{lem:totally-bounded-epsilon-net}): 
For any $\epsilon > 0$, the space of all graphs is approximated by finitely many graphs; hence any graph parameter is bounded by the values among them, which is a constant depending on $\epsilon$.
The drawback of this generality is its poor dependency on the number of instances $n$, which leaves significant room for quantitative improvement.
One possible way to improve the bound is by assuming some properties of the graph distribution because the above derivation is distribution-agnostic;
a concrete strategy for improvement is left for future works.

\subsection{Size-Generalizability}

One interesting topic of GNNs is \emph{size-generalization}, which is a property that a model trained on small graphs should perform well on larger graphs.
Size-generalization is observed in several tasks~\cite{khalil2017learning}; 
however, it has also been proved that some classes of GNNs do not naturally generalize~\cite{yehudai2021from}.
Hence, we want to know about the conditions for GNNs to generalize.

We need to distinguish the ``approximation-theoretic'' size-generalizability and the ``learning-theoretic'' size-generalizability.
The former is the possibility of size-generalization, which is proved by showing the existence of size-generalizing models.
This, however, does not mean that a size-generalizable model is obtained by training; thus, we need to introduce the latter. 
The latter is the degree of size-generalizability when we train a model using a dataset (or a distribution);
it is proved by bounding the generalization error.

\subsubsection{Approximation-Theoretic Size-Generalizability}

We say that a function $p$ is \emph{size-generalizable in approximation-theoretic sense} if for any $\epsilon > 0$, there exists $N>0$ such that we can construct an algorithm $\mathcal{A}$ using dataset $\{ (p(G_{\le N}), G_{\le N}) : |V(G_{\le N})| \le N \}$ such that $|p(G) - \mathcal{A}(G)| \le \epsilon$ with probability at least $1 - \epsilon$ for all $G \in \mathcal{G}$.
This gives one mathematical formulation of the size-generalizability as it requires to fit algorithm $\mathcal{A}$ to all graphs using the dataset of bounded graphs.
In this definition, we have the following theorem.
% which implies that any GNN model that can remember a fixed number of graphs can be trained to size-generalize any size-generalizable functions.

\begin{theorem}
\label{thm:est-size-generalizable}
Estimable functions are size-generalizable in the approximation-theoretic sense.
\end{theorem}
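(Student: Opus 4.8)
The plan is to reduce approximation-theoretic size-generalizability to a finite nearest-neighbor problem in the randomized Benjamini--Schramm metric, exploiting that estimability is equivalent to uniform continuity (Theorem~\ref{thm:estimability-equiv-continuity}) together with the totally boundedness of the graph space (Lemma~\ref{lem:totally-bounded-epsilon-net}).

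First I would fix the target accuracy $\epsilon$. By Theorem~\ref{thm:estimability-equiv-continuity}, $p$ is uniformly continuous, so there is $\delta > 0$ with $d(G,H) \le \delta \Rightarrow |p(G) - p(H)| < \epsilon/2$. By Lemma~\ref{lem:totally-bounded-epsilon-net}, the graph space admits a finite $(\delta/2)$-net $\{H_1, \dots, H_C\}$. The key observation is that, because this net is a \emph{finite} collection of finite graphs, its maximum order $N := \max_j |V(H_j)|$ is finite; hence every net graph $H_j$ together with its label $p(H_j)$ already appears in the dataset $\{(p(G_{\le N}), G_{\le N}) : |V(G_{\le N})| \le N\}$. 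This $N$ is exactly the quantity demanded by the definition of size-generalizability.

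Next I would construct $\mathcal{A}$ as a nearest-neighbor rule in this metric. During construction I precompute, from the dataset, each pair $(p(H_j), z_r(H_j))$ for a radius $r$ large enough that the truncated tail $\sum_{s>r} 2^{-s} \le \delta/8$. Given a test graph $G$, the algorithm estimates the profile distributions $z_1(G),\dots,z_r(G)$ by repeatedly calling \tsf{RandomBallSample} and tallying isomorphism classes, forms the truncated sampling distance $\hat d(G,H_j)$ to each net graph, and outputs $p(H_{j^*})$ for the approximate minimizer $j^*$. Since the true nearest net graph lies within $\delta/2$ and the combined estimation and truncation error can be made at most $\delta/4$, the approximate minimizer satisfies $d(G,H_{j^*}) \le \delta$ with high probability, whence $|p(G) - \mathcal{A}(G)| \le \epsilon/2 < \epsilon$ by uniform continuity.

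The main obstacle is the probabilistic guarantee: I must show that a number of samples independent of $|V(G)|$ suffices to estimate every total-variation distance $d_{TV}(z_s(G), z_s(H_j))$, and hence the whole truncated sum, to within $\delta/4$ simultaneously for all $C$ net graphs with failure probability at most $\epsilon$. This is a uniform concentration argument: the relevant profiles are distributions over the finite set of isomorphism classes of graphs of order at most $r^r$, so a Hoeffding estimate together with a union bound over these classes and the $C$ comparisons yields the required sample complexity, which depends only on $\epsilon$ (through $\delta$, $r$, and $C$) and not on the input size. The remaining steps---controlling the truncation tail and propagating the approximate-argmin slack through uniform continuity---are routine once this concentration bound is in place.
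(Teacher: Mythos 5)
Your proposal is correct and follows essentially the same route as the paper: invoke Theorem~\ref{thm:estimability-equiv-continuity} to get a uniform-continuity modulus $\delta$, take a finite $\delta$-net from Lemma~\ref{lem:totally-bounded-epsilon-net}, set $N$ to the maximum order of the net graphs, and output the label of the (approximately) nearest net graph. The paper's own proof is terser and leaves the profile-estimation and concentration details implicit (they are carried out in the ``only if'' direction of Theorem~\ref{thm:estimability-equiv-continuity}), whereas you spell them out; this is a matter of exposition, not of substance.
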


This theorem is proved by constructing a size-generalizable algorithm.
We first pick the continuity constant $\delta$ for $\epsilon$ using Theorem~\ref{thm:estimability-equiv-continuity}.
Then, we construct a $\delta$-net using Lemma~\ref{lem:totally-bounded-epsilon-net}.
By storing all the values $p(G_i)$ for the graphs in the $\delta$-net, we obtain a size-generalizable algorithm, where $N$ is the maximum number of the vertices in the $\delta$-net.

\subsubsection{Learning-Theoretic Size-Generalizability}

From the learning theoretic viewpoint, size-generalization is a domain adaptation from the distribution of smaller graphs to the distribution of larger graphs~\citep{yehudai2021from}.
Thus, it is natural to utilize the domain adaptation theory~\citep{redko2019advances}.
Especially since we have introduced the sampling distance defined on all pairs of graphs irrelevant to their sizes, we here employ the Wasserstein distance-based approach~\citep{shen2018wasserstein}.

We start from a general situation.
Let $\mathcal{D}_1$ and $\mathcal{D}_2$ be joint distributions of graphs and their labels, and $\mathcal{G}_1$ and $\mathcal{G}_2$ be the corresponding marginal distributions of graphs.
We abbreviate $\E_1$ and $\E_2$ for the expectations on $\mathcal{D}_1$ and $\mathcal{D}_2$, respectively.
The Wasserstein distance between $\mathcal{G}_1$ and $\mathcal{G}_2$ is given by
\begin{align}
    W(\mathcal{G}_1, \mathcal{G}_2) = \inf_{\pi} \E_{(G_1, G_2) \sim \pi} d(G_1, G_2),
\end{align}
where $d$ is the sampling distance of the graphs, and $\pi$ runs over the couplings between these distributions.
Let $\lambda = \inf_h \{ \E_1 | y - h(G)| + \E_2 | y - h(G) | \}$ be the optimal combined error, where $\inf_h$ runs over all $1$-Lipschitz functions.
We have the following lemma.
\begin{lemma}
\label{lem:transfer-bound}
For any $1$-Lipschitz functions $h$ and $h'$, we have the following.
\begin{align}
    \E_1 |y - f(G)|
    \le \E_2 |y - f(G)| + 2 W(\mathcal{G}_1, \mathcal{G}_2) + \lambda,
\end{align}
\end{lemma}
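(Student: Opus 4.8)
The plan is to recognize Lemma~\ref{lem:transfer-bound} as an instance of the classical Wasserstein domain-adaptation bound, now carried out in the metric space of graphs equipped with the sampling distance $d$. Throughout I write $f$ for the $1$-Lipschitz hypothesis appearing in the inequality, and I introduce the \emph{ideal joint predictor} $h^\star$, i.e.\ a $1$-Lipschitz function that nearly attains the infimum defining $\lambda$, so that $\E_1 |y - h^\star(G)| + \E_2 |y - h^\star(G)| \le \lambda + \eta$ for arbitrary $\eta > 0$ (letting $\eta \to 0$ at the end removes the slack, which is needed because the infimum in $\lambda$ may not be attained). First I would split the source risk using the triangle inequality for the absolute loss,
\begin{align}
    \E_1 |y - f(G)| \le \E_1 |y - h^\star(G)| + \E_1 |h^\star(G) - f(G)|.
\end{align}

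The central step is to transport the term $\E_1 |h^\star(G) - f(G)|$ from the source marginal $\mathcal{G}_1$ to the target marginal $\mathcal{G}_2$, at a cost controlled by the Wasserstein distance. The key observation is that $\phi(G) := |f(G) - h^\star(G)|$ is $2$-Lipschitz with respect to $d$: since $f$ and $h^\star$ are each $1$-Lipschitz, the reverse triangle inequality gives $|\phi(G) - \phi(G')| \le |f(G) - f(G')| + |h^\star(G) - h^\star(G')| \le 2\, d(G, G')$. Applying the Kantorovich--Rubinstein duality for $W = W_1$ to the $2$-Lipschitz test function $\phi$ then yields
\begin{align}
    \E_1 |h^\star(G) - f(G)| \le \E_2 |h^\star(G) - f(G)| + 2 W(\mathcal{G}_1, \mathcal{G}_2),
\end{align}
which is exactly where the factor of $2$ multiplying the Wasserstein distance in the statement originates.

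To finish, I would apply the triangle inequality once more on the target domain, $\E_2 |h^\star(G) - f(G)| \le \E_2 |y - f(G)| + \E_2 |y - h^\star(G)|$, and substitute back into the two displays above. Collecting terms gives
\begin{align}
    \E_1 |y - f(G)| \le \E_2 |y - f(G)| + 2 W(\mathcal{G}_1, \mathcal{G}_2) + \left( \E_1 |y - h^\star(G)| + \E_2 |y - h^\star(G)| \right),
\end{align}
and the parenthesized quantity is at most $\lambda + \eta$; sending $\eta \to 0$ yields the claim.

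The step I expect to require the most care is justifying the Kantorovich--Rubinstein duality in this setting: the inequality as stated uses the \emph{primal} (infimum-over-couplings) definition of $W$, while the argument above needs its \emph{dual} Lipschitz form, together with the validity of the $2 W$ bound for $2$-Lipschitz test functions. This duality is legitimate only if $(\mathcal{G}, d)$ is a sufficiently well-behaved (separable) metric space. Here I would invoke the totally boundedness of the graph space (Lemma~\ref{lem:totally-bounded-epsilon-net}), which immediately implies separability of $(\mathcal{G}, d)$ and hence licenses passing between the two forms of $W$; the remaining manipulations are just repeated triangle inequalities and pose no difficulty.
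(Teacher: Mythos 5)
Your proof is correct and follows essentially the same route as the paper's: a triangle inequality on the source, transport of the $2$-Lipschitz difference $|f - h^\star|$ across the two marginals at cost $2W(\mathcal{G}_1,\mathcal{G}_2)$, a triangle inequality on the target, and optimization over the auxiliary hypothesis (the paper takes an infimum over $h'$ at the end rather than fixing a near-minimizer, which is the same thing). The only (harmless) over-caution is your appeal to Kantorovich--Rubinstein duality and separability: the inequality $\E_1[\phi] - \E_2[\phi] \le 2 W(\mathcal{G}_1, \mathcal{G}_2)$ for a $2$-Lipschitz $\phi$ is just the ``easy'' direction, which follows directly from the primal coupling definition of $W$ on any metric space, so no duality theorem (and hence no separability argument) is needed.
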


Combining this result with the Rademacher complexity (Theorem~\ref{thm:rademacher-complexity}), we obtain the following generalization bound.
\begin{theorem}
\label{thm:domain-adaptation}
Let $\epsilon > 0$.
Let $(y_{21}, G_{21}), \dots, (y_{2n}, G_{2n})$ be independently drawn from $\mathcal{D}_2$. 
If $\lambda = O(\epsilon)$ and $n \ge 2^{2^{2^{\tilde\Omega(1/\epsilon)}}}$, then, for any $1$-Lipschitz function $h$, we have
\begin{align}
    \E_{(G_1, y_1) \sim \mathcal{D}_1}[ |y_1 - h_1(G)| ] 
    \le 
    \frac{1}{n} \sum_{i=1}^n | y_{2i} - h(G_{2i}) |
     + 2 W(\mathcal{D}_1, \mathcal{D}_2) + O(\epsilon)
\end{align}
with probability at least $1 - \epsilon$.
\end{theorem}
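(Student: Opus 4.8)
The plan is to split the source risk $\E_1 |y - h(G)|$ into a domain-transfer part and a target-generalization part, controlling each by one of the results already in hand. Fix any $1$-Lipschitz $h: \mathcal{G} \to [0,1]$ (I read the $h_1$ in the statement as this same $h$). First I would invoke Lemma~\ref{lem:transfer-bound} to obtain
\[
\E_1 |y - h(G)| \;\le\; \E_2 |y - h(G)| + 2\,W(\mathcal{G}_1, \mathcal{G}_2) + \lambda .
\]
Since any coupling of the joint distributions $\mathcal{D}_1, \mathcal{D}_2$ projects, by forgetting the labels, to a coupling of the graph marginals $\mathcal{G}_1, \mathcal{G}_2$, and the ground cost on pairs $((G,y),(G',y'))$ dominates $d(G,G')$, we have $W(\mathcal{G}_1,\mathcal{G}_2) \le W(\mathcal{D}_1, \mathcal{D}_2)$. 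Together with the hypothesis $\lambda = O(\epsilon)$, this first step already produces the $2\,W(\mathcal{D}_1, \mathcal{D}_2)$ term and absorbs $\lambda$ into the final $O(\epsilon)$.

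Next I would bound the target population risk $\E_2 |y - h(G)|$ by its empirical counterpart. Working with the loss $\ell_h(G,y) = |y - h(G)|$, which is bounded in $[0,1]$ because $y, h(G) \in [0,1]$, the standard symmetrization plus bounded-differences (McDiarmid) argument gives, uniformly over all $1$-Lipschitz $h$ and with probability at least $1 - \epsilon$,
\[
\E_2 |y - h(G)| \;\le\; \frac{1}{n}\sum_{i=1}^n |y_{2i} - h(G_{2i})| + 2\,\mathcal{R}_n(\ell \circ \mathcal{H}) + \sqrt{\tfrac{\log(2/\epsilon)}{2n}} .
\]
Because $t \mapsto |y - t|$ is $1$-Lipschitz, Talagrand's contraction lemma yields $\mathcal{R}_n(\ell \circ \mathcal{H}) \le \mathcal{R}_n(\mathcal{H}) = R_n$, the quantity controlled by Theorem~\ref{thm:rademacher-complexity}. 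The point of routing through a Rademacher bound is that it holds uniformly over the whole (infinite) class $\mathcal{H}$ under a single high-probability event, which is precisely what licenses the ``for any $1$-Lipschitz $h$'' clause of the statement; the contraction step is the only place that needs a short separate justification.

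It remains to force both residual terms to be $O(\epsilon)$, and this quantitative step is where I expect the only real work. The concentration term $\sqrt{\log(2/\epsilon)/(2n)}$ is already $O(\epsilon)$ once $n$ is singly exponential in $1/\epsilon$, so the binding constraint is $2 R_n = O(\epsilon)$. Inverting the extremely slow rate $R_n = (\log\log n)^{-\Theta(1/\log\log\log\log n)}$ of Theorem~\ref{thm:rademacher-complexity}, I would impose $R_n \le \epsilon$, take logarithms, and solve for $n$: after peeling off three exponentials this forces $\log\log\log n$ to grow at least like a polylogarithmic factor times $\log(1/\epsilon)$, so a triple-exponential lower bound on $n$ suffices, and the hypothesis $n \ge 2^{2^{2^{\tilde\Omega(1/\epsilon)}}}$ is comfortably enough to guarantee $2 R_n = O(\epsilon)$. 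Finally I would add the (deterministic) transfer inequality of the first paragraph to the uniform-convergence inequality of the second and collect $\lambda$, $2 R_n$, and the concentration term into a single $O(\epsilon)$; since the only randomness lives in the sample, the stated probability $1 - \epsilon$ carries over directly, yielding the claimed bound. The main obstacle is thus essentially the bookkeeping of this sample-size inversion, with all structural content supplied by Lemma~\ref{lem:transfer-bound} and Theorem~\ref{thm:rademacher-complexity}.
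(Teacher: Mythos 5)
Your proposal follows exactly the paper's route: the paper's own proof is the one-line remark that the theorem follows by combining Lemma~\ref{lem:transfer-bound} with Theorem~\ref{thm:rademacher-complexity}, and you have simply filled in the standard intermediate steps (symmetrization/McDiarmid, contraction for the absolute loss, and the inversion of the $o(1/\log\log\log n)$ rate to obtain the triple-exponential sample-size requirement). Everything checks out, and your explicit handling of $W(\mathcal{G}_1,\mathcal{G}_2)\le W(\mathcal{D}_1,\mathcal{D}_2)$ is a detail the paper glosses over.
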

The condition $\lambda = O(\epsilon)$ requires the existence of a ``consistent rule'' among both $\mathcal{D}_1$ and $\mathcal{D}_2$.
For example, this condition holds when the labels are generated by $y = f(G) + \epsilon \mathcal{N}(0, 1)$ for some $1$-Lipschitz function $f$, where $\mathcal{N}(0, 1)$ is the standard normal distribution.
We can obtain the size-generalization bound by applying the above theorem for the distribution of large graphs $\mathcal{G}_1$ and of small graphs $\mathcal{G}_2$.
Thus, we only need to evaluate their Wasserstein distance.
The Wasserstein distance can be large in the worst-case;
thus, we here consider concrete examples of graph distributions.

First, we consider the case that undirected graphs are drawn from the \emph{configuration model of $d$-regular graphs}.
In this model, a graph is constructed by the following procedure:
(1) It creates $N$ vertices with $d$ half-edges;
(2) Then, it pairs the half-edges and connects them to obtain edges.
We see that a learning problem on this distribution is size-generalizable.
\begin{proposition}
\label{prop:size-gen}
Let $\mathcal{G}$ be a distribution of random $d$-regular graphs generated by the configuration model, and $\mathcal{G}_{\le N}$ be the distribution conditioned on only graphs of size bounded by $N$.
If $N \ge (\log 1/\epsilon)^{\Omega(\log 1/\epsilon)}$ then $W(\mathcal{G}, \mathcal{G}_{\le D}) = O(\epsilon)$.
\end{proposition}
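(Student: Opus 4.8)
The plan is to reduce the statement to the quantitative local convergence of random $d$-regular graphs to the infinite $d$-regular tree $T_d$. Let $z_r^{\mathrm{tree}}$ denote the $r$-profile obtained by running \tsf{RandomBallSample}($\cdot$, $r$, $r$) on $T_d$ (every vertex, including the root, has degree $d$, so this is well defined). The key observation is that a random ball explored in any $d$-regular graph $G$ agrees with the corresponding ball in $T_d$ unless the exploration revisits a vertex, i.e.\ unless the sampled ball contains a cycle. Coupling the two explorations by their branching choices therefore gives $d_{TV}(z_r(G), z_r^{\mathrm{tree}}) \le \Pr[\text{the radius-}r\text{ ball sampled in } G \text{ is not a tree}]$. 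Note that the collisions produced by sampling neighbors \emph{with replacement} occur with identical probability in $G$ and in $T_d$ (both are $d$-regular) and hence do not contribute to this total-variation gap; only genuine cycles of $G$ do.

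First I would establish the collision bound for the configuration model. Exposing the ball by the standard half-edge pairing, whenever a new half-edge is paired it lands on an already-revealed vertex with probability $O(s_r/m)$, where $m=|V(G)|$ and $s_r = r^{O(r)}$ bounds the number of vertices in the union of the $k=r$ sampled balls of radius $r$ and branching $r$. Since at most $s_r$ pairings are exposed, a union bound yields
\[ \E_{G}\, d_{TV}(z_r(G), z_r^{\mathrm{tree}}) \;\le\; \Pr_{G,\,\mathrm{samp}}[\text{cycle}] \;=\; O(s_r^2/m). \]
(The same estimate covers collisions between the distinct balls.) This is the crux of the argument and the step I expect to be the main obstacle, since it requires the exposure bound to be uniform enough to survive both the weighted sum over $r$ and the extra $1/\epsilon$ factor below.

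Next I would assemble the sampling distance. Split the defining series at $R = \lceil \log_2(4/\epsilon)\rceil = O(\log 1/\epsilon)$: the tail $\sum_{r>R} 2^{-r} d_{TV} \le 2^{-R} \le \epsilon/4$, while the head is at most $\sum_{r\le R} 2^{-r} O(s_r^2/m) = O(s_R^2/m)$ (the super-geometric growth of $s_r^2$ makes the largest term dominate). Writing $L=\log(1/\epsilon)$ we have $s_R^2 = L^{O(L)}$, so $s_R^2/m \le \epsilon/4$ as soon as $m \ge L^{\Omega(L)}/\epsilon = (\log 1/\epsilon)^{\Omega(\log 1/\epsilon)}$; here the $1/\epsilon$ is absorbed because $L^{O(L)} = e^{O(L\log L)}$ dominates $e^{L}=1/\epsilon$. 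Hence every $d$-regular graph with at least $N = (\log 1/\epsilon)^{\Omega(\log 1/\epsilon)}$ vertices satisfies $\E\,\bigl[\sum_r 2^{-r} d_{TV}(z_r(G), z_r^{\mathrm{tree}})\bigr] \le \epsilon/2$.

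Finally I would exhibit a coupling $\pi$ of $\mathcal{G}$ and $\mathcal{G}_{\le N}$: draw $G\sim\mathcal{G}$; if $|V(G)|\le N$ set $G'=G$ (so $d(G,G')=0$), and otherwise draw $G'\sim\mathcal{G}_{\le N}$ independently. This is a valid coupling, and by the triangle inequality through $z_r^{\mathrm{tree}}$, on the event $|V(G)|>N$ we get $d(G,G') \le \sum_r 2^{-r}\bigl(d_{TV}(z_r(G),z_r^{\mathrm{tree}}) + d_{TV}(z_r^{\mathrm{tree}},z_r(G'))\bigr)$, whose expectation is at most $\epsilon/2+\epsilon/2=\epsilon$ because both $G$ and $G'$ have at least $N$ vertices. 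Taking expectations over $\pi$ gives $W(\mathcal{G},\mathcal{G}_{\le N}) \le \E_\pi d(G,G') = O(\epsilon)$. The one caveat to watch is that $\mathcal{G}_{\le N}$ must remain supported on graphs whose size exceeds the convergence threshold (otherwise the small graphs retained by the conditioning are not tree-like); this holds as soon as $\mathcal{G}$ places negligible mass below $(\log 1/\epsilon)^{\Omega(\log 1/\epsilon)}$, which I would make the operative assumption.
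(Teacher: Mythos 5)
Your proposal is correct and follows essentially the same route as the paper's proof: couple the $r$-profiles of $G \sim \mathcal{G}$ and $G' \sim \mathcal{G}_{\le N}$ through the (cycle-free) $d$-regular tree, using the fact that the sampled ball in a configuration-model graph of size $m$ contains a cycle with probability only $r^{O(r)}/m$, and then truncate the sampling-distance series at $r = \Theta(\log 1/\epsilon)$. You supply the half-edge exposure bound, the explicit coupling, and the caveat about small graphs surviving the conditioning, all of which the paper's three-line proof leaves implicit.
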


This result can be generalized to a general distribution of graphs with large girth.
Next, we consider the case where undirected graphs are drawn from a graphon.
A \emph{graphon} $\mathcal{W}$ is a function $\mathcal{W} : [0, 1] \times [0, 1] \to [0, 1]$.
A graph $G_N$ is drawn from $\mathcal{W}$ if we first draw $N$ random numbers $x_1, \dots, x_N \in [0, 1]$ uniformly randomly.
Then, for each pairs $(x_i, x_j)$, we put an edge with probability $\mathcal{W}(x_i, x_j)$.
This model extends Erdos--Renyi random graph and stochastic block model; see \cite{lovasz2012large} for more detail.

\begin{proposition}
\label{prop:size-gen2}
Let $\mathcal{W}$ be a graphon.
Let $N_1$ and $N_2$ be integers with $N_1 < N_2$.
Let $\mathcal{G}_{N_i}$ be a distribution of graphs of $N_i$ vertices drawn from $\mathcal{W}$.
If $N_1 \ge 2^{O(1/\epsilon^2)}$ then $W(\mathcal{G}_{N_1}, \mathcal{G}_{N_2}) \le \epsilon$.
\end{proposition}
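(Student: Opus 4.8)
The plan is to bound the Wasserstein distance by the \emph{independent} coupling (any coupling is an upper bound) and to route the comparison through the graphon itself. Let $z_r(\mathcal{W})$ be the limiting $r$-profile obtained by running \tsf{RandomBallSample} directly on $\mathcal{W}$ (root latent uniform on $[0,1]$; each neighbor of a latent $x$ drawn with density proportional to $\mathcal{W}(x,\cdot)$; induced pairs $(x,x')$ adjacent with probability $\mathcal{W}(x,x')$), and set $\tilde d(G,\mathcal{W}) = \sum_{r\ge 1} 2^{-r}\, d_{TV}(z_r(G), z_r(\mathcal{W}))$. The triangle inequality for $d_{TV}$, applied termwise, yields the pointwise estimate $d(G_1,G_2) \le \tilde d(G_1,\mathcal{W}) + \tilde d(G_2,\mathcal{W})$, so under the product measure
\begin{align}
    W(\mathcal{G}_{N_1},\mathcal{G}_{N_2}) \le \E_{G_1\sim\mathcal{G}_{N_1}}[\tilde d(G_1,\mathcal{W})] + \E_{G_2\sim\mathcal{G}_{N_2}}[\tilde d(G_2,\mathcal{W})].
\end{align}
Since $N_2>N_1$, it suffices to prove the single-graph estimate $\E_{G_N\sim\mathcal{W}}[\tilde d(G_N,\mathcal{W})] \le \epsilon/2$ for $N\ge N_1$. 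Because $d_{TV}\le 1$, the tail $\sum_{r>R}2^{-r}\le 2^{-R}$, so I would fix $R=O(\log 1/\epsilon)$ to make it at most $\epsilon/4$; it then remains to control the head terms $\E_{G_N}[d_{TV}(z_r(G_N),z_r(\mathcal{W}))]$ for $r\le R$.

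Each head term I would bound through the \emph{cut distance} $\delta_\square$. The profile $z_r(\cdot)$ is a bounded-size sampling functional: the output isomorphism class is determined by the induced subgraph on the at most $r^r$ explored vertices, and for $W$-random-type sampling the distribution of such induced subgraphs is governed by the (injective/induced) homomorphism densities, which are continuous in the cut metric by the counting lemma with modulus of order $\binom{r^r}{2}\,\delta_\square$. This gives $\E_{G_N}[d_{TV}(z_r(G_N),z_r(\mathcal{W}))] \lesssim r^{O(r)}\,\E_{G_N}[\delta_\square(G_N,\mathcal{W})]$, reducing everything to how fast a sampled graph approaches its graphon in cut distance.

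The quantitative input is the sampling lemma of Borgs--Chayes--Lov\'asz--S\'os--Vesztergombi: a $W$-random graph on $N$ vertices satisfies $\E[\delta_\square(G_N,\mathcal{W})] = O(1/\sqrt{\log N})$. This single fact explains the stated threshold, since $1/\sqrt{\log N}\le \epsilon'$ is equivalent to $N\ge 2^{\Omega(1/(\epsilon')^2)}$; choosing $\epsilon'$ so that the head sum falls below $\epsilon/4$ then forces $N_1 \ge 2^{O(1/\epsilon^2)}$, and combining the head and tail bounds gives $\tilde d\le\epsilon/2$ and hence $W(\mathcal{G}_{N_1},\mathcal{G}_{N_2})\le\epsilon$.

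I expect the main obstacle to be making the threshold \emph{exactly} $2^{O(1/\epsilon^2)}$, i.e.\ balancing the $r$-dependent modulus $r^{O(r)}$ against the $1/\sqrt{\log N}$ rate after truncation at $R=O(\log1/\epsilon)$: a naive use of the counting lemma reintroduces a factor of order $(\log 1/\epsilon)^{O(\log 1/\epsilon)}$, which overshoots the bound, so one must either sharpen the per-level estimate or trade the truncation level against $N$ more carefully. A secondary difficulty is the local-to-global step itself, namely justifying that the \emph{neighbor-biased}, depth-$r$ branching-$r$ RBS exploration is controlled by the \emph{global} cut distance; this requires handling vertex collisions along the exploration tree (absent in the continuous model) and graphon regions of near-zero degree, where the neighbor-sampling density degenerates.
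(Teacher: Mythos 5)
Your proposal takes essentially the same route as the paper, whose entire proof of this proposition is the single sentence ``This follows from the proof of Lemma 10.31 and Exercise 10.31 in \cite{lovasz2012large}'' --- i.e.\ a citation to exactly the graphon sampling machinery you invoke. You have correctly identified the decisive quantitative ingredient: the $\E[\delta_\square(G_N,\mathcal{W})]=O(1/\sqrt{\log N})$ sampling lemma is what produces the $N_1\ge 2^{O(1/\epsilon^2)}$ threshold, and your structure (triangle inequality through $\mathcal{W}$ under the product coupling, dyadic truncation of the profile sum at $R=O(\log 1/\epsilon)$, counting-lemma control of each head term) is the natural way to fill in the details the paper omits. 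The two obstacles you flag are genuine and are \emph{not} resolved anywhere in the paper: the $r^{O(r)}=(\log 1/\epsilon)^{O(\log 1/\epsilon)}$ modulus from the counting lemma does inflate the exponent beyond a literal $O(1/\epsilon^2)$ unless one reads the paper's $O(\cdot)$ generously, and the mismatch between the neighbor-biased, collision-prone \tsf{RandomBallSample} exploration and the uniform-vertex-subset sampling that the cut-distance counting lemma actually controls is a real local-to-global step that requires either a direct concentration argument for $\mathcal{W}$-random graphs or an extension of the sampling lemma to degree-biased explorations. In short: your proof is at least as complete as the paper's, and your stated caveats are honest descriptions of gaps the paper leaves open rather than defects specific to your approach.
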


Finally, we consider the case that $\mathcal{D}_N$ is obtained from $\mathcal{D}$ by the metric projection.
Let $\Pi_N$ be the projection onto the space of graphs of size at most $N$, i.e., $\Pi(G) = \argmin_{G_N: |V(G_N)| \le N} d(G, G_N)$.
\begin{proposition}
\label{prop:wasserstein}
Let $\mathcal{G}$ be any graph distribution and let $\mathcal{G}_{\le N} = \Pi(\mathcal{G})$ be the projected distribution of graphs of size at most $N$.
For any $\epsilon > 0$, there exists $N$ such that $W(\mathcal{G}, \mathcal{G}_{\le N}) \le \epsilon$.
\end{proposition}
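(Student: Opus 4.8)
The plan is to upper-bound the Wasserstein distance by the transport cost of a single explicit coupling — the one sending each $G$ to its projection $\Pi(G)$ — and then to control that cost \emph{pointwise} using the totally boundedness of the graph space (Lemma~\ref{lem:totally-bounded-epsilon-net}). The key observation is that although the net guaranteed by that lemma may consist of large graphs, it is \emph{finite}, so its members have a maximum size $N$; projecting onto size-$\le N$ graphs can then only beat projecting onto the net.

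First I would fix $\epsilon > 0$ and apply Lemma~\ref{lem:totally-bounded-epsilon-net} to obtain a finite $\epsilon$-net $\{ H_1, \dots, H_C \}$ with $\min_{j} d(G, H_j) \le \epsilon$ for every graph $G$. Finiteness makes $N := \max_{j} |V(H_j)|$ a well-defined integer, and each $H_j$ is itself a graph of size at most $N$. Hence, for every $G$, the projection onto size-$\le N$ graphs does at least as well as projecting onto the net:
\begin{align}
    d(G, \Pi(G)) = \min_{G_N : |V(G_N)| \le N} d(G, G_N) \le \min_{j} d(G, H_j) \le \epsilon,
\end{align}
so the bound $d(G, \Pi(G)) \le \epsilon$ holds uniformly over all $G$.

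Next I would take the diagonal coupling $\pi_0$ that pushes $G \sim \mathcal{G}$ to the pair $(G, \Pi(G))$; its second marginal is by construction $\mathcal{G}_{\le N} = \Pi(\mathcal{G})$. Since the Wasserstein distance is an infimum over couplings, it is dominated by the cost of this fixed plan:
\begin{align}
    W(\mathcal{G}, \mathcal{G}_{\le N}) \le \E_{G \sim \mathcal{G}}[ d(G, \Pi(G)) ] \le \epsilon,
\end{align}
the last step being the pointwise estimate above. This delivers the claimed $N$.

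The only point needing care is the well-definedness and measurability of $\Pi$. Because $d$ depends only on isomorphism classes and there are finitely many isomorphism classes of graphs on at most $N$ vertices, the $\argmin$ defining $\Pi(G)$ ranges over a finite set; it is therefore always attained (ties broken, say, lexicographically), and $\Pi$ is a measurable map into a finite target. I expect this to be the main — though minor — obstacle: once the projection is recognized to take values in the finite set of bounded isomorphism classes, attainment and measurability are automatic, and everything else reduces to the standard fact that the Wasserstein distance is bounded by the cost of any single transport plan. The dependence of $N$ on $\epsilon$ is simply inherited from Lemma~\ref{lem:totally-bounded-epsilon-net}.
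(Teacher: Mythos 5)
Your proof is correct and follows essentially the same route as the paper's: choose $N$ as the maximum number of vertices in the $\epsilon$-net from Lemma~\ref{lem:totally-bounded-epsilon-net}, deduce the pointwise bound $d(G, \Pi(G)) \le \epsilon$, and bound the Wasserstein distance by the cost of the diagonal coupling. Your write-up simply fills in the details (the comparison of the projection with the net, and the well-definedness of $\Pi$) that the paper's three-line proof leaves implicit.
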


A drawback of this result is that an explicit bound of $N$ is not known, even for its deterministic variant in the bounded degree graphs (See Proposition~19.10 in \cite{lovasz2012large}).
The only known bound is for the bounded degree graphs with large girth~\citep{fichtenberger2015constant}. 

\paragraph{Comparison with Existing Results}
Size-generalization of GNNs is reported on several tasks, but its theoretical analysis is limited.
\citet{yehudai2021from} studied the size-generalizability using the concept of $d$-pattern, which is information obtained from $d$-ball; it is similar to our $r$-profile. 
% $d$-pattern is similar to our $r$-profile, but it is less expressive than ours (see also Section~\ref{sec:partially-universal-rbs-gnn}).
Their results are approximation-theoretic as they showed the (non-)existence of size-generalizable models but did not show how such models can be obtained by training on data.
\citet{xu2020neural} proved the size-generalization of the max-degree function under several conditions on the training data and GNNs.
% This result is insightful, but it is too specific to be useful, especially when the target function is unknown.
Their result is essentially an approximation-theoretic as it assumes the dataset lies in and spans a certain space that is sufficient to identify the max-degree function.

\subsection{Partially-Universal RBS-GNNs}
\label{sec:partially-universal-rbs-gnn}

Thus far, we assumed the universal GNNs are plugged into the RBS-GNNs for theoretical analysis.
This assumption achieves the maximum expressive power in this framework; 
however, in practice, we often use expressive but more efficient GNNs such as GCN~\cite{gcn}, GIN~\cite{gin}, or GAT~\cite{gat}.
Here, we discuss what will be changed if we made this modification.

Let $\equiv$ be an equivalence relation on graphs.
We assume that $\equiv$ is \emph{consistent with the weakly connected component decomposition}, i.e., 
if $G_1 \equiv H_1$ and $G_2 \equiv H_2$ then $G_1 + G_2 \equiv H_1 + H_2$, where the ``$+$'' symbol denotes the disjoint union of two graphs.
We say that a function $h$ (resp. a randomized algorithm $\mathcal{A}$) is \emph{$\equiv$-indistinguishable} if $h(G) = h(G')$ (resp. $\mathcal{A}(G) = \mathcal{A}(G')$ given the random sample) for all $G \equiv G'$.
A GNN is \emph{$\equiv$-universal} if it can learn any $\equiv$-indistinguishable functions.
For example, it is known that GIN is universal with respect to the WL indistinguishable functions~\cite{gin}.
Let RBS-GNN[$\equiv$] be a class of RBS-GNNs that uses an $\equiv$-universal GNN $f$ in Equation~\eqref{eq:rb-gnn}.
The following shows the partial universality of this architecture.

\begin{theorem}
\label{thm:indist}
If $f$ is estimable and $\equiv$-indistinguishable, then it is estimable by an RBS-GNN[$\equiv$].
\end{theorem}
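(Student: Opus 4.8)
The plan is to mirror the proof of Theorem~\ref{thm:universal-rbs-gnn}, replacing ``distinguishing isomorphism classes of the sampled profile'' by ``distinguishing $\equiv$-classes,'' and using the consistency of $\equiv$ with the weakly connected component decomposition to bridge component-level and union-level information. First I would invoke Lemma~\ref{lem:canonical-estimable}: since $p$ is estimable, it is estimable by a canonical estimator, which samples $B_1, \dots, B_r$ via \tsf{RandomBallSample} and outputs $\psi(S)$ for the sampled profile $S = G[B_1 \cup \dots \cup B_r]$, where $\psi$ is a fixed value function defined on the finitely many isomorphism classes of graphs of bounded size (at most $r^r$). The goal is then to realize this estimator within the RBS-GNN[$\equiv$] architecture of Equation~\eqref{eq:rb-gnn}.

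The construction is as follows. Because $f$ is $\equiv$-universal and the indicator of membership in a fixed $\equiv$-class is an $\equiv$-indistinguishable function, I would let $f$ approximate the one-hot encoding of the $\equiv$-class of its input component among the finitely many bounded-size $\equiv$-classes; then $\sum_j f(C_j)$ approximates the histogram of $\equiv$-classes of the components $C_1, \dots, C_{N_C}$ of $S$. Here the consistency hypothesis is essential: by repeatedly applying the rule that $G_1 \equiv H_1$ and $G_2 \equiv H_2$ imply $G_1 + G_2 \equiv H_1 + H_2$, any two unions with the same multiset of component $\equiv$-classes are themselves $\equiv$-equivalent, so this histogram determines the $\equiv$-class of $S$. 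Consequently, provided $\psi$ is $\equiv$-indistinguishable, the value $\psi(S)$ is a function of the histogram alone, and I would choose the MLP $g$ (universal on this fixed finite-dimensional histogram domain) to implement that function. Composing the three approximations, namely $f$ for the one-hot encodings, $g$ for the histogram-to-value map, and the canonical estimator for $p$, via a standard $\epsilon/3$ argument on bounded domains, then shows $p$ is estimable by an RBS-GNN[$\equiv$].

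The hard part will be justifying that the canonical estimator can be taken with an $\equiv$-indistinguishable value function $\psi$, i.e.\ that $\equiv$-indistinguishability of the \emph{global} parameter $p$ descends to the \emph{local} value function on sampled profiles. The approach I would take is a realization argument: for each isomorphism class $S$ construct a graph $G_S$ whose $r$-profile concentrates on $S$, so that $\psi(S) \approx p(G_S)$; then arrange, again using the consistency of $\equiv$ with disjoint unions, that $S \equiv S'$ yields $G_S \equiv G_{S'}$ and hence $p(G_S) = p(G_{S'})$, forcing $\psi(S) \approx \psi(S')$. Making this concentration-and-realization step rigorous for arbitrary graphs, rather than the bounded-degree setting where matching local statistics are routinely realizable, is the delicate point; alternatively one could attempt to redo Lemma~\ref{lem:canonical-estimable} directly on the quotient $\mathcal{G}/\equiv$, which the consistency hypothesis makes compatible with the sampling process. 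Everything else reduces to the universality machinery already established for Theorem~\ref{thm:universal-rbs-gnn}.
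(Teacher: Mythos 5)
Your architectural half --- letting $f$ one-hot encode the $\equiv$-class of each component, using consistency with disjoint unions so that the histogram of component classes determines the $\equiv$-class of the sampled profile, and letting $g$ implement the value map --- matches the paper, which disposes of this part with ``by the same argument as Theorem~\ref{thm:universal-rbs-gnn}.'' The gap is exactly where you flag it, and your proposed fix would not go through. The realization argument fails on two counts. First, not every bounded isomorphism class $S$ is realizable as a concentrated $r$-profile of some graph: the profile is a $k$-rooted union of sampled balls, and classes in which several roots land in the same ball, or whose ball structure is incompatible with the branching-factor sampling, occur with vanishing or zero probability for \emph{every} $G$, so no $G_S$ concentrates on them. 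Second, even for realizable $S$, the consistency hypothesis only propagates $\equiv$ across disjoint unions of already-equivalent pieces; $S \equiv S'$ gives you no control over whether your constructed $G_S$ and $G_{S'}$ are $\equiv$-equivalent unless $G_S$ is literally built as a disjoint union of copies tied to $S$, which conflicts with the concentration requirement.

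The paper sidesteps the descent problem rather than solving it: it never claims the canonical estimator's value function $\psi$ is $\equiv$-indistinguishable. Instead it builds an estimator that is $\equiv$-indistinguishable by construction. Take the $\delta$ given by uniform continuity (Theorem~\ref{thm:estimability-equiv-continuity}), fix a $\delta$-net $\{H_1,\dots,H_C\}$ from Lemma~\ref{lem:totally-bounded-epsilon-net}, sample $S$ with $d(S,G)\le\delta$ with high probability, replace $S$ by a fixed representative $[S]$ of its $\equiv$-class, and output $p(H)$ for the net element $H$ nearest to $[S]$. The map $S \mapsto [S] \mapsto H$ is constant on $\equiv$-classes by fiat, and the error decomposes as $|p(G)-p(S)| + |p(S)-p([S])| + |p([S])-p(H)|$, where the middle term is exactly zero because the \emph{target parameter} is $\equiv$-indistinguishable and $S \equiv [S]$, while the outer terms are each at most $\epsilon/2$ by continuity. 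This is precisely your alternative suggestion of redoing the argument on the quotient $\mathcal{G}/\equiv$, executed via the $\delta$-net; you should take that branch and drop the concentration-and-realization step.
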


% The proof of Theorem~\ref{thm:indist} (Appendix~\ref{appendix:proofs-app}) shows that an $\epsilon$-net of the space of $\equiv$-equivalence classes is obtained by selecting a subset of the $\epsilon$-net of the space of all graphs.
% This means that the space of $\equiv$-indistinguishable $1$-Lipschitz functions has smaller Rademacher complexity than all $1$-Lipschitz functions; hence, it is easier to learn and size-generalize.

One application of this theorem is extending the expressivity of GraphSAGE to the partial observation setting.
GraphSAGE can represent the local clustering coefficient if we have the complete observation of the graph~\citep[Theorem~1]{graphsage}.
We can prove the local clustering coefficient is estimable (Proposition~\ref{prop:clustering} in Appendix~\ref{appendix:exp}).
By applying Theorem~\ref{thm:indist} to the equivalence relation $G_1 \equiv G_2$ defined by $f(G_1) = f(G_2)$ for all function $f$ representable by GraphSAGE, we obtain the following.
\begin{proposition}
\label{prop:graphsage-algo2-clustering-coeff}
The mini-batch version of the GraphSAGE (Algorithm~2 in \citep{graphsage}) can estimate the local clustering coefficient.
\end{proposition}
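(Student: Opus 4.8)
The plan is to obtain the conclusion as a direct instantiation of the partial-universality result, Theorem~\ref{thm:indist}, for the specific equivalence relation induced by GraphSAGE. First I would recall that the target parameter, the local clustering coefficient $p$, is estimable by Proposition~\ref{prop:clustering}; this supplies the first hypothesis of Theorem~\ref{thm:indist}. The equivalence relation I would use is the one indicated in the text preceding the statement, namely $G_1 \equiv G_2$ exactly when $f(G_1) = f(G_2)$ for every graph function $f$ representable by GraphSAGE. With this choice, the class RBS-GNN$[\equiv]$ is precisely the mini-batch GraphSAGE of Algorithm~2 in \citep{graphsage}: the inner GNN $f$ in Equation~\eqref{eq:rb-gnn} is a GraphSAGE aggregator, and the outer perceptron $g$ plays the role of the global \textsf{READOUT}, exactly as described in Section~\ref{sec:rbsgnn}.

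The second hypothesis, that $p$ is $\equiv$-indistinguishable, follows immediately from the expressivity of GraphSAGE under complete observation. By \citep[Theorem~1]{graphsage}, the local clustering coefficient is itself representable by GraphSAGE, so $p$ is one of the functions $f$ that define $\equiv$. Consequently $G_1 \equiv G_2$ forces $p(G_1) = p(G_2)$, which is exactly $\equiv$-indistinguishability. This is the step that transfers the known complete-observation expressivity of GraphSAGE into the partial-observation (random-neighborhood) setting.

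It then remains to verify the structural assumptions needed to invoke Theorem~\ref{thm:indist}: that $\equiv$ is consistent with the weakly connected component decomposition, and that GraphSAGE is $\equiv$-universal. For consistency, I would exploit that GraphSAGE computes node embeddings purely locally, so on a disjoint union $G_1 + G_2$ the embeddings of the two parts do not interact; taking a sum-type readout makes a graph-level GraphSAGE function an injective function of the multiset of node embeddings, whence $G_1 \equiv H_1$ and $G_2 \equiv H_2$ yield matching multisets and therefore $G_1 + G_2 \equiv H_1 + H_2$. The $\equiv$-universality is essentially built into the construction, since every $\equiv$-indistinguishable function factors through the GraphSAGE signature and the outer map $g$ supplies the post-processing recovering any such function on the bounded random-ball subgraphs. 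With all hypotheses in place, Theorem~\ref{thm:indist} applied to $p$ yields that $p$ is estimable by an RBS-GNN$[\equiv]$, i.e.\ by mini-batch GraphSAGE. The main obstacle I expect is precisely this consistency-plus-universality check: one must argue carefully that equality of all GraphSAGE graph-level outputs propagates to equality of the per-component embedding multisets, which requires a multiset-injective readout and is the only place where a genuine expressivity argument, rather than a formal manipulation, enters.
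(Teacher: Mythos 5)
Your proposal is correct and follows essentially the same route as the paper: estimability of the local clustering coefficient from Proposition~\ref{prop:clustering}, $\equiv$-indistinguishability from the complete-observation expressivity result (Theorem~1 of \citep{graphsage}), and then a direct application of Theorem~\ref{thm:indist} to the equivalence relation induced by GraphSAGE-representable functions. Your additional verification that $\equiv$ is consistent with the weakly connected component decomposition and that mini-batch GraphSAGE is $\equiv$-universal is a reasonable filling-in of hypotheses the paper leaves implicit, not a departure from its argument.
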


\paragraph{Comparison with Existing Studies}
Equivalence relations associated with GNNs are mainly studied in the context of the ``limitation'' of GNNs: 
If a GNN is $\equiv$-indistinguishable, then it cannot learn any function $h$ that is non $\equiv$-indistinguishable.
\citet{morris2019weisfeiler} proved a message-passing type GNN cannot distinguish two graphs having the same $d$-patterns.
\citet{garg2020generalization} identified indistinguishable graphs of several GNNs, including GCN~\cite{gcn}, GIN~\cite{gin}, GPNGNN~\cite{sato2019approximation}, and DimeNet~\cite{klicpera2020directional}.

On the other hand, we should use non-universal GNNs in practice because more expressive GNNs have higher computational costs (e.g., universal GNNs~\citep{keriven2019universal} is more costly than the graph isomorphism test). %, which is not known to be in P).
We here considered RBS-GNN[$\equiv$] because 
$\equiv$-universal GNNs are theoretically tractable classes of non-universal GNNs.
With similar motivation, \citep{hoang2020graph} proposed GNNs parameterized by information aggregation pattern and proved the $\equiv$-universality, where $\equiv$ is induced by the aggregation patterns.

\section{Conclusion}
\label{sec:conclusion}

We answered the question ``\emph{What graph functions are representable by GNNs when we can only observe random neighborhoods?}''
by proving the functions representable by RBS-GNNs coincides with the estimable functions in the random neighborhood model, which is equivalent to the uniformly continuous functions in the randomized Benjamini--Schramm topology.
The word ``(Some)'' in our title was meant to emphasize the restriction to continuous graph functions.
%proposing a computational model and showed a topology in which functions are representable if and only if they are uniformly continuous. 
%We answered the partial-observation question by proposing a computational model and showed a topology in which functions are representable if and only if they are uniformly continuous. 
The result holds without any assumption on the input graphs, such as the boundedness.
This result gives us a ``functional analysis view'' of graph learning problems and leads to several new learning-theoretic results.
The weakness of our result is the poor dependency on the number of training instances, which is the trade-off for generality.
We believe addressing this issue will be an interesting future direction.
%Other future direction includes the asymptotics of RBS-GNNs.
Another future direction links to the asymptotic behaviors of RBS-GNNs.
Motivating example includes the motif counting problem, in which \citet{bhattacharya2020motif} proved the Horvitz--Thompson estimator is consistent and asymptotically normal.
A corresponding result for the RBS-GNN will allow us to obtain a confidence interval of an estimation and to perform statistical testing.

%\hoang{TODO: Future work of consistency and normality of RBS-GNN.} \citet{bhattacharya2020motif} \citet{klusowski2020estimating}

\paragraph{Potential Impact}
Our work contributes an understanding of general graph learning models whose inputs are random samples of arbitrarily large graphs.
Due to the theoretical nature of our results, we believe there will not be a direct nor indirect negative societal impact.

\paragraph{Acknowledgement}
We would like to thank the anonymous reviewers and the area chairs for their thoughtful comments which help us improve our manuscript.
HN is partially supported by the Japanese Government MEXT SGU Scholarship No. 205144. 

\newpage

\bibliography{main}
\bibliographystyle{plainnat}

\newpage
\begin{appendix}
%%%%%%%%%%%%%%%%%%%%%%%%%%%%%%%%%%%%%%%%%%%%%%%%%%
%%%%%%%%%%%%%%%%%%%%%%%%%%%%%%%%%%%%%%%%%%%%%%%%%%
\section{Complete proofs}
\label{appendix:proofs}

\subsection{Theorem~\ref{thm:universal-rbs-gnn}}
\label{appendix:proofs-rbsgnn}

Theorem~\ref{thm:universal-rbs-gnn} states the universality of the proposed RBS-GNN.
This theorem was proposed to address the question ``Even if we have a powerful GNN, what kind of graph functions can we learn if the input graphs are too large to be computed as a whole?'' posed in the Introduction.
We think of this theorem from two perspectives.
In one view, this theorem extends the universality of ``complete-observation'' GNNs in to ``partial-observation''.
In another, the theorem reduced the universality of GNNs to only universal on estimable functions.
This section presents proofs leading up to Theorem~\ref{thm:universal-rbs-gnn}.

\begin{proof}[Proof of Lemma~\ref{lem:canonical-estimable}]
We construct a canonical estimator from the original estimator.
Let $\mathcal{E}$ be the original estimator and let $N$ be the total number of queries of to achieve accuracy $\epsilon^2/2$.
We first construct an estimator $\mathcal{E}_1$.
$\mathcal{E}_1$ samples an $N$ random balls using \tsf{RandomBallSample}($G$, $N$, $N$) and simulates $\mathcal{E}$ on $\mathcal{E}_1$ using a permutation $\pi$ over the vertices of $B_1 \cup \dots \cup B_r$.
Because of the simulation, we obtain
\begin{align}
    \mathrm{Prob}_{S, \pi}\left[ |f(G) - \mathcal{E}_1(G \mid S, \pi)| > \epsilon^2/2 \right] < \epsilon^2/2.
\end{align}
Then, we construct the final estimator $\mathcal{E}_2$.
$\mathcal{E}_2$ returns the expected value of the output of $\mathcal{E}_1$ over all simulations.
Here, 
\begin{align}
    \mathbb{E}_S [ |f(G) - \mathcal{E}_2(G \mid S)] | ] 
    &=
    \mathbb{E}_S [ |f(G) - \mathbb{E}_\pi [\mathcal{E}_1(G \mid S, \pi)] | ] \\
    &\le 
    \mathbb{E}_{S,\pi} [ |f(G) - \mathcal{E}_1(G \mid S, \pi) | ] \\
    &\le 
    \epsilon^2.
\end{align}
Thus, by the Markov inequality,
\begin{align}
    \mathrm{Prob}_S [ |f(G) - \mathcal{E}_2(G \mid S)] | > \epsilon ] \le \epsilon.
\end{align}
\end{proof}

Using Lemma~\ref{lem:canonical-estimable}, we obtain the proof for Theorem~\ref{thm:universal-rbs-gnn}.

\begin{proof}[Proof of Theorem~\ref{thm:universal-rbs-gnn}]
The output of the canonical estimator is determined by the isomorphism class of the subgraph induced by the balls.
Hence, it is determined by the isomorphism classes of the weakly connected components of the induced subgraph.
This means that we can write the canonical estimator as a function from the set of weakly-connected graphs: $h(\{ C_1, \dots, C_l\})$.
Here, we can injectively map each $C_j$ as a finite-dimensional vector $z_j$ using a universal neural network $f$ since it has a size bounded by a constant (depending on $N$).
Also, the number of connected components is bounded by a constant (depending on $N$).
Thus, we can identify the function $h$ as a permutation-invariant function with a constant number of arguments whose inputs are finite-dimensional vectors.
Therefore, we can apply Theorem~9 in \cite{zaheer2017deep}, which shows that there exists continuous functions $g$ and $\phi$ such that $h(z_1, \dots, z_m) = g(\sum_i \rho(z_m))$. for all $z_1, \dots, z_m$.
Because $\phi$ is approximated by a neural network, we can combine it with the GNN $f$; therefore, we obtain the proof.
\end{proof}

\subsection{Theorem~\ref{thm:estimability-equiv-continuity}}
\label{appendix:proofs-continuity}

Theorem~\ref{thm:estimability-equiv-continuity} is perhaps the most important contribution of this work. 
This theorem continues to analyze the concept of estimable graph functions by providing a topology in which estimable functions are continuous and vice versa.
We find that the result is quite useful when we want to apply functional analysis techniques to analyze graph learning problems.
This section presents the proofs of Lemma~\ref{lem:totally-bounded-epsilon-net} and Theorem~\ref{thm:estimability-equiv-continuity}.

\begin{proof}[Proof of Lemma~\ref{lem:totally-bounded-epsilon-net}]
This is a variant of \cite[Proposition 19.10]{lovasz2012large}, which is for a different topology (different computational model). 
We can prove our lemma by the same strategy, but here we provide a proof for completeness.

Let $r = \lceil \log 2 / \epsilon \rceil $.
We choose a maximal set of graphs $H_1, \dots, H_N$ such that for all $i \neq j$, $d_{TV}(z_s(H_i), z_s(H_j)) > \epsilon / 4$ holds on some $s \le r$.
We can see that such a set exists (see below).
By the maximality, for any graph $G$, there exists $j$ such that $d_{TV}(z_s(G), z_s(H_j)) \le \epsilon/4$ for all $s \le r$, which implies $d(G, H) \le (1/2)^r + \epsilon/2 \le \epsilon$.

We show the upper bound of $C$.
In the proof, we represent $G$ by an $r$-tuple $(z_1(G), \dots, z_r(G))$ of probability distributions, where each $z_s(G)$ lies on $2^{s^s \times s^s} = 2^{(\log 1/\epsilon)^{O(\log 1/\epsilon)}}$-dimensional simplex for $s \le r$.
Because the packing number of $d$-dimensional simplex in the total variation distance (equivalently in the $l_1$ metric) is $(1 / \epsilon)^{O(d)}$,
we cannot choose more than $2^{2^{(\log 1/\epsilon)^{O(\log 1/\epsilon)}}}$ points whose pairwise distance is at least $\epsilon / 4$.
\end{proof}

\begin{proof}[Proof of Theorem~\ref{thm:estimability-equiv-continuity}]
Suppose $f$ is estimable. 
For any $\epsilon > 0$, we choose a canonical estimator $\mathcal{A}$ of accuracy $\epsilon$.
Then we have $|f(G) - f(H)| \le |f(G) - \mathcal{A}(G)| + |\mathcal{A}(G) - \mathcal{A}(H)| + |\mathcal{A}(H) - f(H)|$.
Here, the first and last terms are at most $\epsilon$ by the definition of $\mathcal{A}$ with probability at least $1 - \epsilon$, respectively.
We take $\delta = 2^{-r} \epsilon$ for the second term.
Then, for any $G, H$ with $d(G, H) \le \delta$, we have $d_{TV}(z_r(G), z_r(H)) \le \epsilon$;
hence, by the optimal coupling theorem,\footnote{See~\citep{cuestaalbertos1993optimal}, or \url{pages.uoregon.edu/dlevin/AMS_shortcourse/ams_coupling.pdf} (May, 2021).} there exists a coupling between $Z_r(G)$ and $Z_r(H)$ such that $P(Z_r(G) \neq Z_r(H)) = d_{TV}(z_r(G), z_r(H))  \le \epsilon$.
Thus, the output of the algorithm $\mathcal{A}$ coincides on $G$ and $H$ with a probability at least $1 - \epsilon$.
Therefore we have $|f(G) - f(H)| \le 3 \epsilon$ with probability at least $1 - 3 \epsilon$. 
By taking the expectation, we obtain the result.

Suppose $f$ is uniformly continuous.
For any $\epsilon > 0$, let $\delta > 0$ be the corresponding constant in the continuity definition.
Take a $\delta/2$-net $\{ H_1, \dots, H_C \}$ and let $r = \lceil \log 4/\delta \rceil$.
The algorithm $\mathcal{A}$ performs random sapling to estimate the distribution $(z_1(G), \dots, z_r(G))$ with accuracy $\delta / 2$ with probability at least $1 - \epsilon$.
Then, it outputs $f(H_j)$, where $H_j$ is the nearest neighborhood of $G$.
By the construction, the algorithm finds $H_j$ with $d(G, H_j) \le \delta$ with probability at least $1 - \epsilon$.
Therefore, we have $|f(G) - \mathcal{A}(G)| = |f(G) - f(H_j)| \le \epsilon$ with probability as least $1 - \epsilon$.
\end{proof}

It should be emphasized that the space of all graphs equipped with the randomized Benjamini--Schramm topology is \emph{not} compact, because there is a continuous but not uniformly continuous function; the average degree function is such an example.

\subsection{Proofs for Applications}
\label{appendix:proofs-app}

This section provides the proofs for the theoretical applications section in the main part (Section~\ref{sec:applications}).
Most notably, the proofs for Theorem~\ref{thm:rademacher-complexity},~\ref{thm:est-size-generalizable}, and~\ref{thm:domain-adaptation} are provided here.

\begin{proof}[Proof of Proposition~\ref{prop:perturbation}]
Let $M$ be the endpoints of the random edges.
Then, $M$ induces a uniform distribution on the vertices of $G$.
Any run with $Z_r(G) \cap M = \emptyset$ can be coupled with $Z_r(G')$; so the coupling probability is
\begin{align}
P(M \cap Z_r(G) = \emptyset) 
&= \sum_{x \in M} P(x \not \in Z_r(G)) \\
&\le |M| r^r / n \label{eq:M-ineq} \\ 
&= 2 r^r \delta.
\end{align}
By the optimal coupling theorem, we have $d_{TV}(z_r(G), z_r(G')) =  2 r^r \delta$.
Therefore, 
\begin{align}
    d(G, G') 
    &= \sum_{r=1}^\infty 2^{-r} d_{TV}(G_r, G_r') \\
    &\le \sum_{r=1}^\infty 2^{-r} \min \{ 1, 2 r^r \delta \} \\
    &\le s^s \delta + 2^{-s}
\end{align}
for any $s$.
By putting $s = \log \log (1/\delta)$, we obtain the result.
\end{proof}
If $M$ is chosen adversarially, we cannot obtain the inequality \eqref{eq:M-ineq}.
In particular, if $M$ contains a vertex $x$ with a large PageRank, as the probability of $x \in Z_r(G)$ is large, we cannot bound the distance.

\begin{proof}[Proof of Theorem~\ref{thm:rademacher-complexity}]
We use the following inequality that bounds the Rademacher complexity by the covering number $C_\mathcal{F}(\epsilon)$ of the function space $\mathcal{F}$:
\begin{align}
    R_n(\mathcal{F}) \le \inf_{\epsilon > 0} \left\{ \epsilon + O\left( \sqrt{ \frac{\log C_{\mathcal{F}}(\epsilon)}{n} } \right) \right\}.
\end{align}

We choose an $\epsilon/2$-net of the graphs of size $C(\epsilon/2)$ by Lemma~\ref{lem:totally-bounded-epsilon-net}.
Then, we define an (external) $\epsilon$-cover of the space of $1$-Lipschitz functions by the piecewise constant functions whose values are discretized by $\epsilon/2$, where the pieces are the Voronoi regions of the $\epsilon$-net;
it is easy to verify this is an $\epsilon$-cover of the space of $1$-Lipschitz functions.
This shows $C_{\mathcal{F}}(\epsilon) \le (2 / \epsilon)^{C(\epsilon/2)} = 2^{2^{2^{2^{O(\log (1/\epsilon) \log \log (1/\epsilon))}}}}$. 
By substituting $\epsilon$ satisfying $O(\log (1/\epsilon) \log \log (1/\epsilon)) = \log \log \log (n / \log n))$, we obtain the result.

$\log C_\mathcal{F}(\epsilon) = 2^{2^{2^{O( \log 1/\epsilon \log \log 1/\epsilon})}}$.
We set $\epsilon$ to be $O(\log 1/\epsilon \log \log 1/\epsilon) = \log \log \log (n / \log n)^2$.
Then, by definition, $\sqrt{\log C_\mathcal{F}(\epsilon) / n} = 1/\log n$. 

We try to evaluate $\epsilon$. 
We see $\epsilon$ satisfies $\log 1/\epsilon \log \log 1/\epsilon = \Omega(\log \log \log n)$.

Recall that $x \log x = y$ iff $y = e^{W(x)}$ where $W(x)$ is the Lambert W function.
Since $W(x) = \log (x / \log x) + \Theta(\log \log x / \log x)$, we have $y \ge x / \log x$.
By using this formula, we have $\log 1/\epsilon = \Omega(\log \log \log n / \log \log \log \log n)$.
\end{proof}

\begin{proof}[Proof of Theorem~\ref{thm:est-size-generalizable}]
By Theorem~\ref{thm:estimability-equiv-continuity}, an estimable function $f$ is uniformly continuous. 
Let $\delta$ be the constant for $\epsilon$ for the continuity.
By Lemma~\ref{lem:totally-bounded-epsilon-net}, there is an $\delta$-net $\{ H_1, \dots, H_C \}$ and let $N(\delta)$ be the maximum number of vertices in the graphs in the $\delta$-net.
Our algorithm $\mathcal{A}$ outputs $\mathcal{A}(G) = f(H_j)$ where $H_j$ is the nearest neighbor of $G$.
This algorithm achieves the accuracy of $\epsilon$ because $d(G, H_j) \le \delta$.
Also, the algorithm can be constructed only accessing graphs of size at most $N = \max_j |V(H_j)|$. 
Hence, $f$ is size-generalizable.
\end{proof}

\begin{proof}[Proof of Lemma~\ref{lem:transfer-bound}]
This is an adaptation of \cite{shen2018wasserstein} to our metric space. 
Their proof only uses the ``easy'' direction of the Kantorovich--Rubinstein duality, which holds on any metric space.
Hence, we obtain this lemma.

To be self-contained, we will give a proof.
For any $1$-Lipschitz function $f$ and any coupling $\pi$ between $\mathcal{D}_1$ and $\mathcal{D}_2$, we have the following ``easy'' direction of the Kantorovich--Rubinstein duality:
\begin{align}
    \E_1[f(G_1)] -
    \E_2[f(G_2)] 
    &= \E_{(G_1, G_1) \sim \pi} \E[ f(G_1) - f(G_2) ] \\ 
    &\le \E_{(G_1, G_2) \sim \pi} \E[ d(G_1, G_2) ] \\ 
    &\le W(\mathcal{G}_1, \mathcal{G}_2).
\end{align}
By putting $f = (h - h') / 2$, we obtain
\begin{align}
    \E_1 | h(G) - h'(G) | - \E_2 | h(G) - h'(G) | \le 2 W(\mathcal{G}_1, \mathcal{G}_2).
\end{align}
Hence,
\begin{align}
    \E_1 | y - h(G) | 
    &\le \E_1 | y - h'(G) | + \E_1 | h(G) - h'(G) | \\
    &=  \E_1 | y - h'(G) | + \E_1 [ h(G) - h'(G) | + \E_2 | h(G) - h'(G) | - \E_2 | h(G) - h'(G) | \\
    &\le \E_1 | y - h'(G) | + \E_2 | h(G) - h'(G) | + 2 W(\mathcal{G}_1, \mathcal{G}_2) \\
    &\le \E_2 | y - h(G) | + \E_1 | y - h'(G) | + \E_2 | h(G) - h'(G) | + 2 W(\mathcal{G}_1, \mathcal{G}_2).
\end{align}
By taking the infimum over $h'$, we obtain the theorem.
\end{proof}

\begin{proof}[Proof of Theorem~\ref{thm:domain-adaptation}]
We obtain the result by combining Theorem~\ref{thm:rademacher-complexity} and Lemma~\ref{lem:transfer-bound}.
\end{proof}

\begin{proof}[Proof of Proposition~\ref{prop:size-gen}]
% precisely, it holds on configuration model
Let $\mathcal{G}_N$ be the distribution of random $d$-regulra graphs of size $N$.
Then, we can see that $Z_r(G_N) \mid G_N \sim \mathcal{G}_N$ has no cycle with probability at least $1 - r^{O(r)}/N$.
This implies that we can couple $Z_r(G) \mid G \sim \mathcal{G}$ and $Z_r(G_{\le N}) \mid G_{\le N} \sim \mathcal{G}_{\le N}$ with probability at least $1 - r^{O(r)}/N$.
By putting $r = \log 1/\epsilon$, we obtain the result.
\end{proof}

\begin{proof}[Proof of Proposition~\ref{prop:size-gen2}]
This follows from the proof of Lemma 10.31 and Exercise 10.31 in \cite{lovasz2012large}.
\end{proof}

\begin{proof}[Proof of Proposition~\ref{prop:wasserstein}]
We can choose $N$ by the maximum number of vertices in the $\epsilon$-net.
Then, we have $d(G, \Pi(G)) \le \epsilon$. 
Thus the Wasserstein distance is bounded by $\epsilon$.
\end{proof}

\begin{proof}[Proof of Theorem~\ref{thm:indist}]
We introduce a helper concept, \emph{$\equiv$-indistinguishably estimable}, which is a class of functions that is estimable by $\equiv$-indistinguishable computation on $r$-profile.
By the same argument as Theorem~\ref{thm:universal-rbs-gnn}, we can show that RBS-GNN[$\equiv$] can represent $\equiv$-indistinguishably estimable function.

Now we prove that if a function $f$ is estimable and $\equiv$-indistinguishable, then it is $\equiv$-indistinguishably estimable. 
Because $f$ is estimable, there exists $\delta > 0$ such that $|f(G) - f(H)| \le \epsilon / 2$ if $d(G, H) < \delta$.
We fix a $\delta$-net $\{ H_1, \dots, H_C \}$ of the graph space.
We consider a quotient space of the graphs by $\equiv$, select a representative $[G]$ to each quotient, and assign $H_i$ to $[G]$ which is the nearest neighbor of the representative $G$.

Our estimator $\mathcal{A}$ is the following.
First, we obtain a subgraph $S$ by sampling sufficiently many vertices to be $d(S, G) \le \delta$ with probability at least $1 - \epsilon$.
Second, we take the representative $[S]$ of the equivalent class containing $S$.
Finally, we output the value $f(H)$, where $H$ is the nearest neighbor of $[S]$.
By construction, $\mathcal{A}$ is an $\equiv$-indistinguishable computation after the sampling.
Here,
\begin{align}
    |f(G) - \mathcal{A}(G)| \le |f(G) - f(S)| + |f(S) - f([S])| + |f([S]) - f(H)| \le \epsilon
\end{align}
with probability at least $1 - \epsilon$,
where the first term in the right-hand side is at most $\epsilon/2$ with probability at least $1 - \epsilon$ due to the sampling and continuity, the second term is zero due to the $\equiv$-indistinguishability, and the last term is at most $\epsilon/2$ due to the $\delta$-net and uniform continuity.
\end{proof}

%%%%%%%%%%%%%%%%%%%%%%%%%%%%%%%%%%%%%%%%%%%%%%%%%%
%%%%%%%%%%%%%%%%%%%%%%%%%%%%%%%%%%%%%%%%%%%%%%%%%%
\section{Extended Results: Finite-Dimensional Vertex Features}
\label{appendix:feat}

We can extend our framework to the vertex-featured case.
We assume the vertex features are in $[0, 1]^d$.
This assumption is well-aligned with the pre-processing step in practice where vertex features are normalized~\citep{gcn,nt2019revisiting,hoang2020graph}. 

The estimability is defined similarly, where we additionally assume that the estimation is uniformly continuous with respect to the vertex features on the sampled subgraph (in the standard topology of $\mathbb{R}^d$).
Then, we can prove the RBS-GNN can estimate arbitrary estimable vertex-featured graph parameters.

The difficulty is how to define the topology on the vertex-featured graphs.
As in the non-featured case, We want to define $Z_r(G)$ by the ``frequency'' of the graphs.
However, since there are uncountably many vertex-featured graphs, we need a technique.
To address this issue, we fix an $\epsilon$-net on $[0, 1]^d$; it has the cardinality of $(1/\epsilon)^d$.
Then, we approximate the vertex features of the sampled graph by the elements of the $\epsilon$-net by the uniform continuity. 
Then, the number of ``vertex-featured graphs'' of $N$ vertices is bounded by $((1 / \epsilon)^d)^{O(N \times N)}$; hence we can define the randomized Benjamini--Schramm topology.
The space is totally bounded since the $\epsilon$-net is constructed by combining the $\epsilon$-net of the graph and the $\epsilon$-net of $[0, 1]^d$.
Note that this makes no significant difference on the size of the $\epsilon$-net since the difference is absorbed in the nested power.

%Existing universal GNNs assumed the number of vertices is bounded and fixed (say, $N$), and showed the universality in the functions continuous in the following topology: two graphs $G_1$ and $G_2$ 
%
%$([0, 1]^d)^{N \times N} / \sim$, where $\sim$ is the isomorphism relation. 
%Here, we call this topology the \emph{standard topology}.
%We can prove the relationship between the standard topology and the randomized Benjamini--Schramm topology.
%This clarifies that our result is an extension of the existing results.
%
%\begin{proposition}
%The continuity in the standard topology coincides with the continuity in the randomized Benjamini--Schramm topology if the graphs are restricted to have size $N$.
%\end{proposition}
%\begin{proof}
%If there exists a permutation between $G_1$ and $G_2$ such that $|G_1 - \pi(G_2)| \le \epsilon$, we can simulate a random walk 
%
%Suppose that $G_1, G_2, \dots$ converges in the standard topology.
%Then, 
%The space of the graphs of $N$ vertices having $d$-dimensional vertex features is a quotient space of $([0, 1]^d)^{N \times N}$ so it is finite-dimensional.
%Since each isomorphism class induces a compact set, 
%\end{proof}

%%%%%%%%%%%%%%%%%%%%%%%%%%%%%%%%%%%%%%%%%%%%%%%%%%
%%%%%%%%%%%%%%%%%%%%%%%%%%%%%%%%%%%%%%%%%%%%%%%%%%
\section{Extended Results: Vertex Classification Problems}
\label{appendix:vertex}

In this section, we extend our framework for the graph classification problem to the vertex classification problem.

The vertex classification problem is usually defined as follows. 
We are given a set of graphs $G_1, \dots, G_N$ with the ``supervised vertices'' $S_1 \subseteq V(G_1), \dots, S_N \subseteq V(G_N)$ and the labels $y_u$ on the supervised vertices.
The task is to find an equivariant function $h \colon G \mapsto (y_1, \dots, y_n) \in \mathcal{Y}^{V(G)}$.
This formulation, however, is not suitable for large graphs because it needs to output values to all the vertices.
Here, we recognize a vertex classification problem as a rooted graph classification problem as in \cite{nt2019revisiting}: 
The input of the problem is a set of pairs $(y_{v_i}, (G_i, v_i))$ of rooted graphs $(G_i, v_i)$ and the label $y_{v_i}$. 
The goal is to find a function $h$ such that $h((G, v)) \approx y_v$.
It should be noted that a graph $G$ with a supervised nodes $S \subseteq V(G)$ in the original formulation is transformed to $|S|$ rooted graphs $\{ (G, v) : v \in S \}$.

Our framework for the graph classification problem is easily extended to the rooted graph classification problem.
We first modify our computational model by assuming the root of the graph is available at the beginning of the computation.
Then, the estimability of the function is defined in the same way using this computational model.
Then, we modify the RBS-GNN to have one additional ball centered at the root vertex, i.e.,
\begin{align}
    \text{\normalfont RBS-GNN}((G, v)) = g\left(f(C_0), \sum_{j=1}^{N_C} f(C_j)\right)
\end{align}
where $B_0, \dots, B_k$ are the random balls obtained by \textsf{RandomBallSample} where the root of $B_0$ is conditioned by $v$, and $C_0, \dots, C_{CS}$ are the weakly connected components of $G[B_0 \cup \dots \cup B_k]$ where $C_0$ contains $v$.
We can prove that any estimable vertex parameter in the random neighborhood model for the rooted graph is estimable using the RBS-GNN.
Also, by extending the randomized Benjamini--Schramm topology to the rooted graphs, we see the estimability coincides with the uniform continuity in the randomized Benjamini--Schramm topology of rooted graphs.

Using this topology, we can obtain the vertex classification version of the results in Section~\ref{sec:applications}. 
As the number of rooted graphs of $N$ vertices is $N$ times larger than the number of non-rooted graphs of $N$ vertices, the covering number of the rooted graph space is larger than that of the non-rooted graph space.
But this makes no significant difference because this gap is absorbed in the nested logarithm.

This formulation gives several consequences on the vertex classification problem.
\begin{itemize}
\item 
We can evaluate the required number of supervised vertices to obtain the desired accuracy. 
In this formulation, each supervised vertex $v$ corresponds to a rooted graph $(G, v)$.
Thus, if the supervised vertices are chosen randomly, the required number of supervised vertices is evaluated by the Rademacher complexity of the model.
In particular, we can obtain a model with an accuracy $\epsilon$ from the constantly many supervised vertices.
\item 
We can characterize the difficulty of a vertex classification problem with different supervision using transfer learning.
Imagine a situation that the supervised vertices have large degrees, but we want to predict the vertex property on low-degree vertices.
This situation can be recognized that the training and test rooted graph distributions, $\mathcal{G}_\text{train}$ and $\mathcal{G}_\text{test}$, are different.
Therefore, we can apply Lemma~\ref{lem:transfer-bound} to obtain an estimation of the test error, which involves the Wasserstein distance of these distributions.
\end{itemize}

\section{Extended Results: Practical Applications}
\label{appendix:exp}

Many real-world graph parameters are estimable in our framework.
This section provides a review of graph parameters and their estimability in our random neighborhood model.
Most notably, this section demonstrates the usage of Theorem~\ref{thm:estimability-equiv-continuity} for practical graph parameters and provide the proof for Proposition~\ref{prop:graphsage-algo2-clustering-coeff}.
In addition, we provide experimental results on real-world datasets to verify our theoretical claims.

\subsection{Graph Parameters}
For convenience, we re-state the random neighborhood model and the \emph{estimable} definitions here.
The original definitions were provided in Section~\ref{ssec:computational-model}.

\begin{definition}[Random Neighborhood Model]
\label{def:rnm}
The random neighborhood computational model allows the following three queries given an input graph $G$:
{\normalfont
\begin{itemize}[noitemsep,topsep=0pt]
    \item \tsf{SampleVertex}($G$): Sample a vertex $u \in V$ uniformly randomly.
    \item \tsf{SampleNeighbor}($G$, $u$): Sample a vertex $v$ from the neighborhood of $u$ uniformly randomly, where $u$ is an already obtained vertex.
    \item \tsf{IsAdjacent}($G$, $u$, $v$): Return whether the vertices $u$ and $v$ are adjacent, where $u$ and $v$ are already obtained vertices.
\end{itemize}}
This computational model induces an estimability definition and a topology, named random Benjamini-Schramm, on the graph space $\mathcal{G}$. 
\end{definition}

\begin{definition}[Constant-Time Estimable Graph Parameter]
\label{def:est_restate}
A graph parameter $p$ is constant-time estimable on the random neighborhood model (estimable for short) if for any $\epsilon > 0$ there exists an integer $N$ and a randomized algorithm $\mathcal{A}$ in the random neighborhood model such that $\mathcal{A}$ performs at most $N$ queries and $|\mathcal{A}(G) - p(G)| < \epsilon$ with probability at least $1 - \epsilon$ for all graphs $G \in \mathcal{G}$. 
\end{definition}

\subsubsection{Non-estimable Graph Parameters}

We first see that an estimable parameter is bounded as follows.
%Note that this definition implies the boundedness of estimable parameters as follows.

\begin{proposition}
\label{prop:boundedness}
An estimable parameter $p$ is bounded.
\end{proposition}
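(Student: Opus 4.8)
The plan is to show that any estimable parameter must take values in a bounded range by exploiting the very definition of estimability (Definition~\ref{def:est}), which requires a fixed query budget $N$ and accuracy $\epsilon$ that work \emph{uniformly} over all graphs $G \in \mathcal{G}$. The key observation is that an algorithm $\mathcal{A}$ performing at most $N$ queries in the random neighborhood model can only produce finitely many distinct ``observation transcripts.'' Each transcript is a record of which vertices were sampled and the outcomes of the \tsf{IsAdjacent} queries among them; since at most $N$ vertices are touched, the induced subgraph visible to $\mathcal{A}$ has at most $N$ vertices, so there are only finitely many isomorphism types of such observations. Consequently the output of $\mathcal{A}$ is determined (up to randomness) by one of finitely many possibilities, and its expected value lies in a bounded set.

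First I would fix any particular $\epsilon$, say $\epsilon = 1$ (or any constant), and obtain the corresponding budget $N$ from Definition~\ref{def:est}. Then I would argue that $\mathcal{A}$'s behavior depends only on the isomorphism class of the subgraph it observes through its $N$ queries, of which there are at most some finite number $M = M(N)$ (one can bound $M$ by the number of labeled graphs on $N$ vertices, which is at most $2^{N^2}$). Because $\mathcal{A}$ is a fixed randomized algorithm, its output distribution ranges over a finite family indexed by these observation types; in particular the realizable output values form a bounded set, say contained in $[-B, B]$ for some finite $B$ depending only on $\mathcal{A}$ and $N$. The estimability guarantee then forces $|p(G) - \mathcal{A}(G)| < 1$ with probability at least $1 - 1 > 0$ for every $G$, so for each $G$ there is at least one realizable output value within distance $1$ of $p(G)$, giving $|p(G)| \le B + 1$ uniformly in $G$.

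The main obstacle will be making precise the claim that $\mathcal{A}$'s outputs range over a bounded set independent of $G$. The subtlety is that a randomized algorithm's output need not be literally one of finitely many real numbers if $\mathcal{A}$ uses its internal randomness to produce arbitrary reals; however, the cleanest route is to note that for the \emph{canonical estimator} guaranteed by Lemma~\ref{lem:canonical-estimable}, the output is literally a deterministic function of the isomorphism class of the sampled induced subgraph $G[B_1 \cup \dots \cup B_r]$, whose size is bounded by a constant. Invoking Lemma~\ref{lem:canonical-estimable} to replace $\mathcal{A}$ by a canonical estimator of accuracy $1$, the output takes values in a finite set $\{ c_1, \dots, c_M \}$ (one value per isomorphism class of bounded graphs). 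Setting $B = \max_i |c_i|$, which is finite since there are finitely many such classes, the accuracy guarantee $|p(G) - \mathcal{A}(G)| < 1$ holding with positive probability yields $|p(G)| < B + 1$ for all $G$, establishing that $p$ is bounded.
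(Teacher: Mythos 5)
Your argument is correct in substance but takes a genuinely different route from the paper. The paper's proof goes through Theorem~\ref{thm:estimability-equiv-continuity} (estimability implies uniform continuity) and Lemma~\ref{lem:totally-bounded-epsilon-net}: it fixes a finite $\delta$-net $\{G_1, G_2, \dots\}$ of the whole graph space and bounds $p$ by $1 + \max_i |p(G_i)|$ via the triangle inequality with the nearest net element. You instead stay at the algorithmic level: you invoke Lemma~\ref{lem:canonical-estimable} to replace an arbitrary estimator by a canonical one whose output is a deterministic function of the isomorphism class of a sampled subgraph of bounded size, so the output ranges over a finite set $\{c_1,\dots,c_M\}$, and the accuracy guarantee holding with positive probability pins $p(G)$ within distance $\epsilon$ of that set. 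Your route is arguably more elementary --- it needs neither the continuity characterization nor the $\epsilon$-net construction, only the canonical-estimator reduction --- and you correctly identify and dispose of the one real subtlety (that a general randomized algorithm could emit arbitrary reals from its internal coins). The two proofs are ultimately cousins: the finiteness of the $\delta$-net in the paper is itself derived from the finiteness of isomorphism classes of bounded graphs, which is exactly what you use directly. One slip you must repair: you instantiate Definition~\ref{def:est} with $\epsilon = 1$, which gives success probability at least $1 - 1 = 0$, so the claim ``with probability at least $1-1>0$'' is vacuous and the step ``there is at least one realizable output within distance $1$ of $p(G)$'' does not follow as written. Take any $\epsilon < 1$, say $\epsilon = 1/2$; then the event $|p(G) - \mathcal{A}(G)| < 1/2$ has probability at least $1/2 > 0$, some realizable value $c_i$ witnesses it, and $|p(G)| \le B + 1/2$ with $B = \max_i |c_i|$.
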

\begin{proof}
Recall that an estimable parameter is uniformly continuous (Theorem~\ref{thm:estimability-equiv-continuity}).
Let $\delta$ be the constant for $\epsilon = 1$ for the continuity.
Let us fix an $\delta$-net $\{ G_1, G_2, \dots \}$ of the graph space using the totally boundedness of the space (Lemma~\ref{lem:totally-bounded-epsilon-net}).
Then, $p$ is bounded by $C = 1 + \max_i p(G_i)$.
In fact, for any graph $G \in \mathcal{G}$, we have
\begin{align}
    p(G) \le | p(G) - p(G_i)| + |p(G_i)| \le C
\end{align}
where $G_i$ is the nearest neighbor of $G$ in the $\epsilon$-net.
\end{proof}

Example~\ref{example:nonest} in Section~\ref{ssec:computational-model} states that the number of vertices and the min/max degrees are not estimable;
these immediately follow form Proposition~\ref{prop:boundedness} since they are unbounded parameters.
Similarly, the average degree is unbounded so it is not estimable.
%It is trivial to see that we need access to the whole graph to compute these parameters.
%Suppose the number of vertices ($\nu(G) \defeq |V(G)|$) is estimable, then there exist a constant $N$ and an algorithm $\mathcal{A}$ such that $|\mathcal{A}(G) - v(G)| < \epsilon$ with probability at least $1-\epsilon$.
%Since $\mathcal{A}$ makes $N$ queries by definition, it has access to at most $N$ vertices in the random neighborhood computational model; hence, $\mathcal{A}(G) \leq N$ for all $G$.
%For every $\epsilon > 0$, we can pick a graph $G \in \mathcal{G}$ with $\nu(G) > N + \epsilon$ so that $|\mathcal{A}(G) - \nu| > \epsilon$.
%This is a contradiction.
%Similarly, we can show by contradiction that for any constant $N$ and randomized algorithm $\mathcal{A}$, we can find an arbitrary large graph $G$ such that algorithm $\mathcal{A}$ cannot estimate the min/max degrees almost surely with any $\epsilon > 0$.
The connectivity function is an example of bounded but non-estimable graph parameter.

\begin{proposition}
\label{prop:connectivity-impossible}
$p(G) = 1[ \text{$G$ is connected} ]$ is not estimable.
\end{proposition}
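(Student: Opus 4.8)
The plan is to prove the contrapositive of the ``only if'' direction of Theorem~\ref{thm:estimability-equiv-continuity}: since estimability is equivalent to uniform continuity in the randomized Benjamini--Schramm topology, it suffices to show that $p$ is \emph{not} uniformly continuous. Concretely, I would exhibit a single $\epsilon > 0$ (say $\epsilon = 1/2$) for which no $\delta$ can work, i.e., a sequence of graph pairs $(G_n, H_n)$ with $d(G_n, H_n) \to 0$ but $|p(G_n) - p(H_n)| = 1$ for every $n$. This converts the impossibility claim into the construction of two locally indistinguishable graph families, one connected and one not.

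For the construction I would take $G_n$ to be the cycle $C_{2n}$ on $2n$ vertices and $H_n = C_n \sqcup C_n$ the disjoint union of two $n$-cycles. Both are $2$-regular, so that every vertex has an identical local neighborhood; moreover $G_n$ is connected and $H_n$ is not, whence $|p(G_n) - p(H_n)| = 1$ for all $n$. It then remains only to establish that $d(G_n, H_n) \to 0$, which is where the real work lies.

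To bound the sampling distance I would fix $r$ and compare the $r$-profiles $z_r(G_n)$ and $z_r(H_n)$. Since in a cycle of length exceeding $2r$ the depth-$r$ neighborhood of any vertex is isomorphic (as a rooted structure) to the depth-$r$ neighborhood in the bi-infinite path $\mathbb{Z}$, the law of a single sampled ball produced by \tsf{RandomBallSample}($\cdot$, $r$, $r$) is identical in $G_n$ and $H_n$. The only way the finite sample can witness the difference between one long cycle and two shorter ones is if two of the $r$ sampled balls overlap or are joined by an \tsf{IsAdjacent} edge. Because the whole sample touches at most $c(r)$ vertices for a constant $c(r)$ depending only on $r$, a union bound over the $O(c(r)^2)$ vertex pairs shows that this exceptional event has probability at most $c'(r)/n$ in \emph{both} graphs. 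Conditioned on its complement, the induced $k$-rooted isomorphism class is a disjoint union of rooted sub-paths whose joint law factorizes identically across the two families. Hence $d_{TV}(z_r(G_n), z_r(H_n)) \le c'(r)/n \to 0$ for each fixed $r$.

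The hard part is precisely this profile comparison: I must argue carefully that, off the exceptional event, the sampled structure really has the same distribution in $G_n$ and $H_n$ (so that local sampling genuinely cannot see global connectivity), and that the detecting event is uniformly $O(c'(r)/n)$ by a birthday-type estimate. Granting this, I would pass from the per-$r$ convergence to $d(G_n, H_n) = \sum_{r \ge 1} 2^{-r} d_{TV}(z_r(G_n), z_r(H_n)) \to 0$ by truncating the sum at a level $r_0$ (bounding the tail by $2^{-r_0}$ via the $2^{-r}$ weighting) and then sending $n \to \infty$ in the finite head. With $d(G_n, H_n) \to 0$ while $|p(G_n) - p(H_n)| = 1$, the parameter $p$ violates Definition~\ref{def:uniform-continuous} for $\epsilon = 1/2$, so $p$ is not uniformly continuous and therefore, by Theorem~\ref{thm:estimability-equiv-continuity}, not estimable.
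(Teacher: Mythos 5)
Your proposal is correct and follows essentially the same route as the paper: disprove uniform continuity by exhibiting a connected/disconnected pair that is asymptotically indistinguishable under local sampling, then invoke Theorem~\ref{thm:estimability-equiv-continuity}. The only difference is the concrete witness (two $n$-cycles versus one $2n$-cycle, rather than the paper's two size-$N$ cliques with and without a bridging edge), and your birthday-bound plus tail-truncation analysis of $d(G_n,H_n)$ is if anything more detailed than the paper's sketch.
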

\begin{proof}
We prove this proposition by giving a counter example showing $p$ violates the definition for continuity.
Since continuity is equivalent to estimability, such counter example would also disprove the estimability of $p$.

We first fix $\epsilon = 1/2$. 
Then, we choose two graphs $G_1$ and $G_2$ such that $G_1$ is the disjoint union of two cliques of size $N$ and $G_2$ is obtained from $G_1$ by adding one edge between them.
The figure below demonstrates for the case $N=6$.
\begin{figure}[h]
    \centering
    \includegraphics[width=0.6\textwidth]{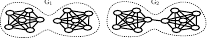}
\end{figure}

We see that $d(G_1, G_2) \le \delta$ for any chosen $\delta > 0$ if $N$ is sufficiently large.
This is because the distribution $z_r(G_1)$ and $z_r(G_2)$ of isomorphism classes only different at the event that one of the two connected vertices of $G_2$ is sampled.
The probability for such event becomes increasingly insignificant when $N$ is sufficiently large.
Hence, the distance $d(G_1, G_2)$ can be arbitrarily small as stated above.
However, by the definition of the connectivity function, for any $N$ we always have $|p(G_1) - p(G_2)| = 1 > 1/2$.
Hence, $p$ is not continuous, and by Theorem~\ref{thm:estimability-equiv-continuity}, it is also not estimable.
\end{proof}

\subsubsection{Estimable Graph Parameters}
%Under the random neighborhood computational model, the \emph{sampling distance} of two graphs is:
%\begin{align}
%    d(G, H) = \sum_{r=1}^\infty 2^{-r} d_{TV}(z_r(G), z_r(H)),
%\end{align}
%where $d_{TV}$ is the total variation distance of two probability distributions with countable supports given by 
%\begin{align}
%\label{eq:tv}
%d_{TV}(p, q) = \frac{1}{2} \| p - q \|_1 = \frac{1}{2} \sum_{B \in \mathcal{B}_r} |p(B) - q(B)|,
%\end{align}
%where $\mathcal{B}_r$ is the set of all possible isomorphism classes of random balls with radius and branching factor $r$ (as noted in the beginning of Section~\ref{sec:theory}). 
%Note that $z_r(\cdot)$ is a distribution of the isomorphism classes when the random ball sampling procedure is run on a graph.

The following propositions proves statements in Example~\ref{example:est}.

\begin{proposition}
\label{prop:triangle}
The triangle density is a uniformly continuous parameter and estimable.
\end{proposition}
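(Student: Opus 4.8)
The plan is to establish uniform continuity of the triangle density in the randomized Benjamini--Schramm topology and then read off estimability for free via Theorem~\ref{thm:estimability-equiv-continuity}. I take the triangle density to be the homomorphism density $t(K_3, G) = \hom(K_3, G)/|V(G)|^3 \in [0,1]$, equivalently the probability that three independent uniform vertices $v_1, v_2, v_3$ are pairwise adjacent (since $K_3$ has no loops, non-injective maps contribute nothing). The crucial observation is that this is a strictly \emph{local} quantity: whether $v_1, v_2, v_3$ span a triangle depends only on the three pairwise adjacencies, hence on a radius-$0$ view of the triple.

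First I would show that $t(K_3, \cdot)$ is a bounded linear functional of a single profile $z_r(G)$ with $r = 3$. Recall that $Z_r(G)$ records the $k$-rooted isomorphism class of $G[B_1 \cup \dots \cup B_k]$ with $k = r$, where the $k$ roots are the centers produced by independent calls to \tsf{SampleVertex}, i.e.\ $k$ independent uniform vertices. Taking $r = 3$, the three roots are exactly three independent uniform vertices, and the induced subgraph records their mutual adjacencies (the roots are distinguishable within a rooted isomorphism class). Consequently there is a fixed $\{0,1\}$-vector $w$, namely the indicator of those isomorphism classes in which roots $1,2,3$ are pairwise adjacent, such that
$$ t(K_3, G) = \langle w, z_3(G) \rangle, $$
because summing this indicator against the distribution $z_3(G)$ reproduces the probability that the three roots form a triangle.

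Given this representation, continuity is immediate. For any graphs $G, H$,
$$ |t(K_3, G) - t(K_3, H)| = |\langle w, z_3(G) - z_3(H)\rangle| \le \|z_3(G) - z_3(H)\|_1 = 2\, d_{TV}(z_3(G), z_3(H)). $$
Since $d(G, H) = \sum_{s \ge 1} 2^{-s} d_{TV}(z_s(G), z_s(H)) \ge 2^{-3} d_{TV}(z_3(G), z_3(H))$, we obtain $d_{TV}(z_3(G), z_3(H)) \le 8\, d(G, H)$, and hence $|t(K_3,G) - t(K_3,H)| \le 16\, d(G,H)$. Thus $t(K_3, \cdot)$ is $16$-Lipschitz, in particular uniformly continuous: given $\epsilon > 0$ take $\delta = \epsilon/16$. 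Estimability then follows directly from Theorem~\ref{thm:estimability-equiv-continuity}.

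I expect the only delicate point to be the bookkeeping behind the representation $t(K_3, G) = \langle w, z_3(G)\rangle$: one must verify that the roots appearing in the profile are genuinely independent and uniform, and that their pairwise adjacencies are preserved by the rooted isomorphism class, so that $w$ is well defined and independent of $G$. Once this locality and linearity are in place, the Lipschitz bound is routine. As a sanity check and an alternative route to estimability, one can instead estimate $t(K_3, G)$ by the Monte Carlo estimator that draws triples of vertices with \tsf{SampleVertex}, tests the three adjacencies with \tsf{IsAdjacent}, and averages; a Hoeffding bound shows that $O(\epsilon^{-2}\log(1/\epsilon))$ samples suffice to achieve accuracy $\epsilon$ with probability $1-\epsilon$, matching Definition~\ref{def:est}.
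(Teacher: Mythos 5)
Your proposal is correct and follows essentially the same route as the paper, which likewise proves uniform continuity by bounding the triangle probability through the total variation distance of the sampling profiles (phrased there via the optimal coupling theorem) and separately exhibits the \tsf{SampleVertex}/\tsf{IsAdjacent} Monte Carlo estimator. Your linear-functional formulation $t(K_3,G)=\langle w, z_3(G)\rangle$ makes the Lipschitz constant explicit, and your Hoeffding bound slightly sharpens the paper's Chebyshev-based $O(\epsilon^{-3})$ sample count, but these are refinements rather than a different argument.
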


Using Theorem~\ref{thm:estimability-equiv-continuity}, it is clear that we only need to prove estimability or continuity. 
We show both proofs for this case as a demonstration.
For simplicity, we assume the input graph $G$ is undirected.

\begin{proof}[Proof for Estimability]
We show that the triangle density is estimable by constructing a random algorithm and prove that this algorithm estimates the triangle density to an arbitrary precision dependent only on the number of random samples (Definition~\ref{def:est}).
The randomized algorithm can be implemented under the random neighborhood computational model (Definition~\ref{def:rnm}).

\begin{algorithm}
\caption{Triangle Density Estimation in the Random Neighborhood Model}
\label{algo:triangle-algo}
\begin{algorithmic}[1]
\Procedure{IsTriangle}{$G$, $u$, $v$, $q$}
    \State $uv \leftarrow$ \tsf{IsAdjacent}($G$, $u$, $v$);
    \State $uq \leftarrow$ \tsf{IsAdjacent}($G$, $u$, $q$);
    \State $qv \leftarrow$ \tsf{IsAdjacent}($G$, $q$, $v$);
    \State{\Return $uv \wedge uq \wedge qv$}; \Comment{$\wedge$ is the logical ``and''.}
\EndProcedure

\Procedure{TriangleDensity}{$G$, $T$} % \Comment{Estimation}
    \State{triangles $\leftarrow 0$};
    \For{$i$ in $1,\dots,T$}
    \State $u, v, q \leftarrow$ \tsf{SampleVertex}($G$); \Comment{Runs 3 times to get 3 samples.}
    \State{triangles $\leftarrow$ triangles $+$ \tsf{IsTriangle}($G, u, v, q$)};
    \EndFor
    \State{\Return $\frac{1}{T}\text{triangles}$};
\EndProcedure
\end{algorithmic}
\end{algorithm}

The procedure \tsf{IsTriangle} in Algorithm~\ref{algo:triangle-algo} return $1$ if the three input vertices induce a triangle and $0$ otherwise.
Let $X$ be the output of \tsf{IsTriangle} given three random vertices $u, v, \text{and}\ q$ from graph $G$, and $\bar{X}$ be the output of \tsf{TriangleDensity}.
By definition, the expectation $\mathbb{E}(X)$ is the true triangle density $p_\Delta$.
Since the $p_\Delta$ and $\text{Var}(X)$ are clearly finite, we can apply the Chebyshev's concentration bound to the sample average $\bar{X}$ with $T$ samples to obtain the following. For any $\epsilon > 0$,
\begin{align}
    \mathbb{P}(|\bar{X} - p_\Delta)| \geq \epsilon) \leq \frac{\text{Var}(X)}{\epsilon^2 T}.
\end{align}
This bound shows that if we take $T = O(\epsilon^{-3})$ samples, then with probability at least $1-\epsilon$ we obtain an estimation less than $\epsilon$ from the true value.
Note that $T$ is only dependent on the precision $\epsilon$ and not the size of $G$; this shows the intuition behind the constant-time nature of our RBS-GNN.

\end{proof}
\begin{proof}[Proof for Continuity]
We now prove that the triangle density $p_\Delta$ is uniformly continuous in the randomized Benjamini-Schramm topology.
% Recall that the triangle density is the probability that the randomly sampled three vertices form a triangle.
%TODO (Hoang): Introduce some restricted class of graphs, making the estimability of some parameters work!
%Another direction is about looking at the function.
For a given $\epsilon > 0$, we choose $\delta = 2^3 \epsilon$.
Let $G_1$ and $G_2$ be two graphs satisfying $d(G_1, G_2) \le \delta$. 
We denote two random variable $X_1$ and $X_2$ to represent the event that random sampling from $G_1$ and $G_2$ obtained a triangle.
$X_1$ and $X_2$ follows $z_\Delta(G_1)$ and $z_\Delta(G_2)$ distributions, respectively.
By the optimal coupling theorem, the random sampling on $G_1$ and $G_2$ can be coupled with probability at least $1 - \epsilon$.
\begin{align}
    d_{TV}(z_\Delta(G_1), z_\Delta(G_2)) = \min_{(X_1, X_2)-\text{couplings}}\mathbb{P}(X_1 \neq X_2)
\end{align}

Hence, by the definition of the triangle density, these differs at most $\epsilon$.
\end{proof}

Using a similar technique, we can prove the estimability or equivalently uniformly continuity of the local clustering coefficient.

\begin{proposition}
\label{prop:clustering}
The local clustering coefficient is uniformly continuous and estimable.
\end{proposition}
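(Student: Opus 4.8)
The plan is to exploit Theorem~\ref{thm:estimability-equiv-continuity} so that it suffices to establish \emph{either} estimability \emph{or} uniform continuity, and then, as the paper does for the triangle density (Proposition~\ref{prop:triangle}), to present both for illustration. The guiding observation is that the local clustering coefficient of a vertex $v$ is the probability that two randomly chosen neighbors of $v$ are adjacent, so it is a statistic of the radius-one ego-network of $v$, which is fully captured by the profile $z_r$ for any $r \ge 2$.

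First I would prove estimability by designing an algorithm in the spirit of Algorithm~\ref{algo:triangle-algo}. In each of $T$ rounds, call \tsf{SampleVertex}$(G)$ to obtain a vertex $v$, call \tsf{SampleNeighbor}$(G,v)$ twice to obtain neighbors $u,w$, and record the indicator $X = \mathbb{1}[u \neq w]\,\mathbb{1}[\tsf{IsAdjacent}(G,u,w)]$. Since $X \in \{0,1\}$, its variance is finite, so Chebyshev's inequality applied to the sample mean $\bar X$ gives $\mathbb{P}(|\bar X - \mathbb{E}[X]| \ge \epsilon) \le \mathrm{Var}(X)/(\epsilon^2 T)$, and choosing $T = O(\epsilon^{-3})$ yields accuracy $\epsilon$ with probability at least $1-\epsilon$ using a number of queries depending only on $\epsilon$. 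The remaining task is to check that $\mathbb{E}[X]$ recovers the clustering coefficient, which is where the work concentrates.

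For the continuity route I would argue directly that the quantity $\mathbb{E}[X]$ above is a measurable function of the profile $z_r$ with $r \ge 2$: the event ``sample $v$, then two distinct adjacent neighbors'' corresponds to a fixed set of isomorphism classes recorded in the $r$-profile. Hence, given $\epsilon$, fixing $r = 2$ and $\delta = 2^{-r}\epsilon$, any pair $G_1, G_2$ with $d(G_1,G_2) \le \delta$ satisfies $d_{TV}(z_r(G_1), z_r(G_2)) \le \epsilon$; by the optimal coupling theorem we can couple the two sampling processes to agree with probability at least $1-\epsilon$, so the clustering estimates differ by at most $\epsilon$, establishing uniform continuity.

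The hard part will be reconciling the exact combinatorial definition of the local clustering coefficient with the fact that \tsf{SampleNeighbor} draws neighbors \emph{with replacement} and cannot reveal the degree $d_v$ in constant time. Drawing two neighbors with replacement makes the adjacency probability equal to $2\,e(\mathcal{N}(v))/d_v^2 = c(v)\,(1 - 1/d_v)$ rather than $c(v) = 2\,e(\mathcal{N}(v))/(d_v(d_v-1))$, so the naive estimator carries a degree-dependent bias. I expect the crux to be handling this normalization---either by conditioning on the event $u \neq w$ and arguing the induced reweighting is itself continuous, or by letting the branching factor grow with the radius $r$ so that the bias vanishes uniformly, which is legitimate because the profile refines as $r \to \infty$. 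Once the estimand is pinned down, the concentration and coupling steps are routine adaptations of the triangle-density argument.
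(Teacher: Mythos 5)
Your overall strategy matches the paper's: the paper gives no separate argument for this proposition and simply invokes ``a similar technique'' to the triangle-density proof (Proposition~\ref{prop:triangle}), i.e., a constant-query sampling estimator with a Chebyshev bound for estimability and an optimal-coupling argument on the $r$-profile for uniform continuity. You have, moreover, spotted the one place where the analogy is not literal --- the degree normalization --- which the paper silently elides. The problem is that your proposal stops exactly there. As written, your estimator has $\E[X] = \frac{1}{n}\sum_v c(v)(1 - 1/d_v)$, which is a different graph parameter from the average local clustering coefficient $\frac{1}{n}\sum_v c(v)$, and the multiplicative bias $(1-1/d_v)$ does not vanish as $T$ or $r$ grows: it equals $1/2$ on every degree-two vertex no matter how many samples you draw. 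In particular, your second proposed remedy (letting the branching factor grow with the radius) does not help, because the bias lives in the estimator, not in the resolution of the profile.

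The first remedy you name does close the gap, and it is worth writing out: condition on $u \neq w$ by rejection. For a vertex $v$ with $d_v \ge 2$, two independent calls to \tsf{SampleNeighbor} return distinct vertices with probability $1 - 1/d_v \ge 1/2$, so $O(\log 1/\epsilon)$ attempts suffice to produce a uniformly random ordered pair of \emph{distinct} neighbors with failure probability at most $\epsilon$; conditioned on success, $\mathrm{Prob}[(u,w) \in E] = 2e(\mathcal{N}(v))/(d_v(d_v-1)) = c(v)$ exactly, so the per-vertex estimate is unbiased up to the failure event. Vertices with $d_v \le 1$, where $c(v)=0$ by convention, are handled by the same device: if all $O(\log 1/\epsilon)$ sampled neighbors coincide, output $0$, which is wrong only on an event of probability at most $\epsilon$. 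The total query count is $O(\epsilon^{-3}\log(1/\epsilon))$, still independent of $|V(G)|$, and the Chebyshev and coupling steps then go through verbatim as in Proposition~\ref{prop:triangle}; since the conditioned estimator is a function of the isomorphism class of the sampled rooted subgraph, its expectation is determined by $z_r(G)$ for $r \ge 2$, which is exactly what the continuity half of the argument needs.
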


\subsection{Graph Classification in Random Neighborhood Model}
\label{appendix:clf}

We show the results for RBS-GNN on social networks datasets in the TUDatasets repository~\citep{Morris+2020}: COLLAB, REDDIT-BINARY, and REDDIT-MULTI5K.
We preprocess these datasets in the same way as proposed by~\citet{gin}. 
Because of this pre-processing, each vertex has a feature vector representing its position in the degree distribution. 
The reason for such setting is because in 1-WL, the degree determines the initial coloring~\citep{morris2020weisfeiler}. 
A summary of the datasets is given in Table~\ref{tab:datainfo}.

\begin{table}[h]
\begin{center}
\caption{Overview of the graph classification datasets. This is a small part of the TUDataset~\citep{Morris+2020}. $|\mathcal{G}|$ denotes the total number of graphs in the dataset, $\overline{\nu}(G)$ denotes the average number of nodes per graph, $\vert c \vert$  denotes the number of classes, $d$ denotes the dimensionality of vertex features (created by~\citep{gin}). $r$ denotes the radius of random balls and also the branching factor. $k$ denotes the number of random balls. $\%$ denotes the average coverage of random balls in terms of the number of edges. $\%$ Memory denotes the relative data storage size.}
\label{tab:datainfo}
\begin{tabular}{lrrrrrrrrr}
\toprule
\textsc{Datasets} & $|\mathcal{G}|$ & $\overline{\nu}(G)$ & $|c|$ & $d$ & $r$ & $b$ & $k$ & $\% |E(G)|$ & $\%$ Memory\\
\midrule
COLLAB & 5000 & 74.5 & 3 & 367 & 2 & 5 & 3 & $58.3\pm 22.9$ & $55.7$\\
RDT-BINARY & 2000 & 429.6 & 2 & 566 & 3 & 5 & 3 & $37.3\pm 28.6$ & $14.9$\\
RDT-MULTI5K & 5000 & 508.5 & 5 & 734 & 3 & 5 & 3 & $23.8\pm 17.7$ & $14.2$\\ 
\bottomrule
\end{tabular}
\end{center}
\end{table}

To simulate the random ball sampling procedure of RBS-GNN, we pre-sample the original datasets and use these random samples in both training and testing.
Table~\ref{tab:datainfo} shows the sampling setting we used to report the results in Table~\ref{tab:cv_scores}.
We prepared multiple other settings for $r$, $b$, and $k$, see the provided source code for more detail (\texttt{supp/notebooks/Preprocessing.ipynb}).
Let $N_{G}(\cdot)$ be the neighborhood function of the sampled input graph, we construct a $K$-layers $f$ as follows.
\begin{align}
    h_G^{(\ell)}(u) &= \text{MLP}^{(\ell)}\left(\sum_{v \in N_G(u)} h^{(\ell-1)}(v)\right), \ell = 1,\dots,K \\
    f(G) &= \sum_{\ell=1,\dots,K} \sum_{u \in G} h_G^{(\ell)}(u),
\end{align}
where $\text{MLP}^\ell$ is a single layer MLP with no activation.
Let $g$ be a 2-layers MLP with ReLU activation functions, the final output of $\text{RBS-GNN}[\equiv_2]$ is given similar to Equation~\eqref{eq:rb-gnn}:
\begin{align}
    \text{RBS-GNN}[\equiv_2](G) = g\left(\sum_j f(G)\right).
\end{align}

The implemented RBS-GNN is denoted by $\text{RBS-GNN}[\equiv_2]$ because its GNN component $f$ resembles a message-passing GNN such as GIN~\citep{gin}. 
In all our experiments, the graph neural network $f$ has 4 propagation layers and 5 MLP layers, each of these layers have 32 ReLU hidden units (see \texttt{supp/src/rbsgnn/models/mpgnn\_batched.py} and \texttt{supp/notebooks/Cross-Validation Scores.ipynb}).
Regularization methods are weight decay ($10^{-3}$), learning rate decay (initialized at $0.01$, step size 50, $\gamma=0.5$), and dropout (0.5).
%We further append the cycle homomorphism densities vector~\citep{hoang2020graph} to create a more expressive $f$ and denoted the corresponding model $\text{RBS-GNN}[\equiv_2+S]$ because such model captures the spectrum of input graphs.
%\begin{align}
%    \text{RBS-GNN}[\equiv_2+S](G) = g\left(\text{hom}_{C_1,\dots,C_5}(G) \| \sum_j f(G)\right),
%\end{align}
%
%where $\text{hom}_{C_2,\dots,C_5}(G)$ denotes the 4-dimensional vector of homomorphism density from cycles (of length 2 to 5) to the input graph $G$, and $\|$ denotes vector concatenation.
%This model is a slightly more expressive than $\text{RBS-GNN}[\equiv_2]$ because of this modification. 

\paragraph{Computational Resources} We run all our experiments on a single computer having a single Intel CPU (i7-8700K\@3.70GHz), 64GB DDR4 memory, and a NVIDIA GeForce GTX 1080Ti GPU with CUDA 11.3 (driver version 465.31).
The system runs Linux Kernel 5.12.6.
Our model's prototype is implemented using Python 3.9 and PyTorch 1.8.1+cu111 (see \texttt{supp/src/requirements.txt} for the detail of the Python environment).

\paragraph{Cross-Validation Scores}
Reporting the 10-folds (also 3-folds and 5-folds) cross-validation scores for graph learning model is a common task in the literature~\cite{gin,gntk,wu2020comprehensive}.
We compare our practical implementation of RBS-GNN to existing benchmarks in terms of 10-folds cross validation scores.
We show the results for other baselines reported by~\citet{gin}.
These baselines includes WL-subtree, PatchySan, and AWL (see Section 7 in~\citep{gin} for more detail).
Note that RBS-GNN only has access to partial inputs for both training and testing procedures. The fractions of observed edges and storage memory are shown in Table~\ref{tab:datainfo}.

\begin{table}[h]
\begin{center}
\caption{Best test accuracy (in percentage) for the graph classification task. Note that RBS-GNN only has access to 20$\sim$60 percent of edges and 15$\sim$50 percent of node features (Table~\ref{tab:datainfo}).}
\label{tab:cv_scores}
\begin{tabular}{lcccc}
\toprule
\textsc{Models} & COLLAB & RDT-BINARY & RDT-MULTI5K\\
\midrule
GIN-0 & $80.2 \pm 1.9$ & $92.4 \pm 2.5$ & $57.5 \pm 1.5$\\
WL subtree & $78.9 \pm 1.9$ & $81.0 \pm 3.1$ & $52.5 \pm 2.1$ \\
PatchySan & $72.6 \pm 2.2$ & $86.3 \pm 1.6$ & $49.1 \pm 0.7$ \\
AWL & $73.9 \pm 1.9$ & $87.9 \pm 2.5$ & $ 54.7 \pm 2.9$ \\
\midrule
$\text{RBS-GNN}[\equiv_2]$ & $80.3 \pm 1.5$ & $79.0 \pm 1.9$ & $44.0 \pm 1.4$ \\
\bottomrule
\end{tabular}
\end{center}
\end{table}

The result in Table~\ref{tab:cv_scores} shows that at best we can achieve similar result for COLLAB while observing only 55.7\% of the data.
Note that this experiment is different from any random pooling or drop-out techniques because we use random balls for \emph{both} train and test.
The results for REDDIT datasets are also quite similar to the results of complete-observation models. 
As shown in Table~\ref{tab:datainfo}, we only observe about 14\% of the REDDIT original datasets.
We selected such extreme example to show that although by a small observation, in some cases GNNs can still predict well.
This observation implies that the true labeling function of these dataset is smooth in the random Benjamini-Schramm topology.  
%Figure XXX further explains the dependency of cross-validation scores to the size of random balls. 

\paragraph{Size Generalization}
Our Theorem~\ref{thm:est-size-generalizable} identified size-generalizability with estimability.
We verify this theoretical result experimentally as follows.
For each dataset, we split into two sets of train and test data (0.5/0.5 split).
Furthermore, the train set consists of smaller graphs while the test set has larger ones.
We then report the test accuracy for targets being the global clustering coefficient (estimable) and the max-degree (non-estimable).
In machine learning, it is often more beneficial to work with classification rather than regression; therefore, we categorized the clustering coefficients and the max-degree values into 5 classes.
Each class represents a 20-percentile proportion of the data (see \texttt{supp/notebooks/preprocessing.ipynb}).
The test accuracy is reported in Table~\ref{tab:size_generalization}.

\begin{table}[h]
\begin{center}
\caption{Test accuracy (in percentage) of $\text{RBS-GNN}[\equiv_2]$ for the size-generalization task.}
\label{tab:size_generalization}
\resizebox{\textwidth}{!}{
\begin{tabular}{ccc|ccc}
\toprule
\multicolumn{3}{c}{\textit{Global Clustering}} & \multicolumn{3}{c}{\textit{Max-Degree}}\\
COLLAB & RDT-BINARY & RDT-MULTI5K & COLLAB & RDT-BINARY & RDT-MULTI5K \\
\midrule
$23.7 \pm 3.7$ & $33.7 \pm 1.1$ & $40.0 \pm 2.0$ & $16.1 \pm 2.1$ & $43.4 \pm 1.2$ & $39.8 \pm 3.1$ \\
\bottomrule
\end{tabular}
}
\end{center}
\end{table}

\end{appendix}

\end{document}